\newtheorem{theorem}{Theorem}[section]
\newtheorem{corollary}{Corollary}[theorem]
\newtheorem{proposition}[theorem]{Proposition}
\newtheorem{definition}[theorem]{Definition}
\newcommand{\flickr}{{\fontfamily{lmtt}\selectfont Flickr}}
\newcommand{\reddit}{{\fontfamily{lmtt}\selectfont Reddit}}
\newcommand{\yelp}{{\fontfamily{lmtt}\selectfont Yelp}}
\newcommand{\arxiv}{{\fontfamily{lmtt}\selectfont ogbn-arxiv}}
\newcommand{\products}{{\fontfamily{lmtt}\selectfont ogbn-products}}
\newcommand{\papers}{{\fontfamily{lmtt}\selectfont ogbn-papers100M}}
\newcommand{\arxivshort}{{\fontfamily{lmtt}\selectfont arxiv}}
\newcommand{\productsshort}{{\fontfamily{lmtt}\selectfont products}}
\newcommand{\papersshort}{{\fontfamily{lmtt}\selectfont papers100M}}
\newcommand{\collab}{{\fontfamily{lmtt}\selectfont ogbl-collab}}
\newcommand{\collabshort}{{\fontfamily{lmtt}\selectfont collab}}
\newcommand{\shadow}{\textsc{shaDow-GNN}}
\newcommand{\shadowsage}{\textsc{shaDow-SAGE}}
\newcommand{\shadowgat}{\textsc{shaDow-GAT}}
\newcommand{\shadowgcn}{\textsc{shaDow-GCN}}
\newcommand{\shadowgin}{\textsc{shaDow-GIN}}
\newcommand{\shadowsgc}{\textsc{shaDow-SGC}}
\newcommand{\std}[1]{\footnotesize{\color{gray}$\pm$#1}}
\newcommand{\ie}{\emph{i.e.}}
\newcommand{\eg}{\emph{e.g.}}
\newcommand{\parag}[1]{\noindent\textbf{#1}\hspace{.3cm}}
\newcommand{\trans}{\mathsf{T}}
\newcommand{\ee}{\bar{\bar{\bm{e}}}}
\newcommand{\sample}{\text{{\fontfamily{lmtt}\selectfont EXTRACT}}}
\newcommand{\relu}{\text{{\fontfamily{lmtt}\selectfont ReLU}}}
\newcommand{\func}[2][]{
    {\text{\fontfamily{lmtt}\selectfont #1}\left(#2\right)}}
\newcommand{\degv}[2][]{
    \delta_{[#1]}\left(#2\right)}
\newcommand{\G}[1][]{
    \IfEq{#1}{}
    {\mathcal{G}}
    {\mathcal{G}}_{#1}}
\newcommand{\V}[1][]{
    \IfEq{#1}{}
    {\mathcal{V}}
    {\mathcal{V}_{#1}}}
\newcommand{\E}[1][]{
    \IfEq{#1}{}
    {\mathcal{E}}
    {\mathcal{E}_{#1}}}
\newcommand{\N}[1][]{
    \IfEq{#1}{}
    {\mathcal{N}}
    {\mathcal{N}}_{#1}}
\newcommand{\X}[1][]{
    \IfEq{#1}{}
    {\bm{X}}
    {\bm{X}^{\paren{#1}}}}
\newcommand{\Asym}[1][]{
    \IfEq{#1}{}
    {\widetilde{\bm{A}}}
    {\widetilde{\bm{A}}_{#1}}}
\newcommand{\Arw}[1][]{
    \IfEq{#1}{}
    {\widehat{\bm{A}}}
    {\widehat{\bm{A}}_{#1}}}
\newcommand{\deltarw}{\widehat{\bm{\delta}}}
\newcommand{\A}[1][]{
    \IfEq{#1}{}
    {\bm{A}}
    {\bm{A}}_{#1}}
\newcommand{\Hx}[1][]{
    \IfEq{#1}{}
    {\bm{H}}
    {\bm{H}}^{(#1)}}
\newcommand{\W}[1][]{
    \IfEq{#1}{}
    {\bm{W}}
    {\bm{W}^{\paren{#1}}}}
\newcommand{\size}[1]{\left\lvert #1 \right\rvert}
\newcommand{\paren}[1]{\left( #1 \right)}
\DeclarePairedDelimiterX\set[1]\lbrace\rbrace{\def\given{\;\delimsize\vert\;}#1}
\def\eqref#1{equation~\ref{#1}}
\def\ceil#1{\lceil #1 \rceil}
\def\1{\bm{1}}
\def\vh{{\bm{h}}}
\def\mA{{\bm{A}}}
\DeclareMathAlphabet{\mathsfit}{\encodingdefault}{\sfdefault}{m}{sl}
\SetMathAlphabet{\mathsfit}{bold}{\encodingdefault}{\sfdefault}{bx}{n}
\newcommand{\R}{\mathbb{R}}
\definecolor{airforceblue}{rgb}{0.36, 0.54, 0.66}
\definecolor{colR1}{rgb}{0.0, 0.18, 0.39}
\definecolor{LightCyan}{HTML}{d0FFFF}
\definecolor{LightGreen}{HTML}{ECFFB9}
\definecolor{HeavyCyan}{HTML}{80ffff}
\definecolor{LightRed}{HTML}{F1EAFE}
\definecolor{LightYellow}{HTML}{ffffd0}
\definecolor{LightGrey}{HTML}{F9E8E2}
\newcolumntype{a}{>{\columncolor{LightCyan}}c}  
\newcolumntype{i}{>{\columncolor{white}}c}      
\pgfplotsset{
    line and fill/.style={
        legend image code/.code={%
          \draw [##1,fill=none, thick] (0mm,0mm) -- (4mm,0mm);
        },
    },
}
\newcommand{\errorband}[6][]{
    \addplot [draw=none, stack plots=y, forget plot] 
      table [x={#3},y expr=\thisrow{#4}-\thisrow{#5}] {#2};
    \addplot [fill=gray!40, stack plots=y, opacity=0.5, #1, draw=none, forget plot] 
      table [x={#3},y expr=2*\thisrow{#5}] {#2} \closedcycle;
    \addplot [stack plots=y, draw=none, forget plot] 
      table [x={#3},y expr=-(\thisrow{#4}+\thisrow{#5})] {#2};
    \addplot [forget plot, thick, #1, fill=none] 
      table [x={#3},y expr=\thisrow{#4}] {#2};
    \addlegendimage{line and fill,#1}
    \addlegendentry{#6}
}
\pgfplotsset{compat=1.16}
\newcommand\blfootnote[1]{%
  \begingroup
  \renewcommand\thefootnote{}\footnote{#1}%
  \addtocounter{footnote}{-1}%
  \endgroup
}
\title{Decoupling the Depth and Scope of\\Graph Neural Networks}
\author{
    Hanqing Zeng\\
    USC\\
    \texttt{zengh@usc.edu}
    \And
    Muhan Zhang\\
    Peking University, BIGAI\\
    \texttt{muhan@pku.edu.cn}
    \And
    Yinglong Xia\\
    Facebook AI\\
    \texttt{yxia@fb.com}
    \And
    Ajitesh Srivastava\\
    USC\\
    \texttt{ajiteshs@usc.edu}
    \And
    Andrey Malevich\\
    Facebook AI\\
    \texttt{amalevich@fb.com}
    \And
    Rajgopal Kannan\\
    US ARL\\
    \texttt{rajgopal.kannan.civ@mail.mil}
    \And
    Viktor Prasanna\\
    USC\\
    \texttt{prasanna@usc.edu}
    \And
    Long Jin\\
    Facebook AI\\
    \texttt{longjin@fb.com}
    \And
    Ren Chen\\
    Facebook AI\\
    \texttt{renchen@fb.com}
}
\begin{document}

\maketitle

\begin{abstract}
State-of-the-art Graph Neural Networks (GNNs) have limited scalability with respect to the graph and model sizes.
On large graphs, increasing the model depth often means exponential expansion of the scope (\ie, receptive field). 
Beyond just a few layers, two fundamental challenges emerge: 
\begin{enumerate*}
  \item degraded \emph{expressivity} due to oversmoothing, and 
  \item expensive \emph{computation} due to neighborhood explosion. 
\end{enumerate*}
We propose a design principle to decouple the depth and scope of GNNs --
to generate representation of a target entity (\ie, a node or an edge), we first extract a localized subgraph as the \emph{bounded-size} scope, and then apply a GNN of arbitrary depth on top of the subgraph. 
A properly extracted subgraph consists of a small number of critical neighbors, while excluding irrelevant ones. 
The GNN, no matter how deep it is, smooths the local neighborhood into informative representation rather than oversmoothing the global graph  into ``white noise''. 
Theoretically, decoupling improves the GNN expressive power from the perspectives of graph signal processing (GCN), function approximation (GraphSAGE) and topological learning (GIN). 
Empirically, on seven graphs (with up to 110M nodes) and six backbone GNN architectures, our design achieves significant accuracy improvement with orders of magnitude reduction in computation and hardware cost. 
\blfootnote{Correspondence to: Muhan Zhang, \texttt{muhan@pku.edu.cn}}

\end{abstract}

\section{Introduction}
\label{sec: intro}

Graph Neural Networks (GNNs) have now become the state-of-the-art models for graph mining \citep{gnn_survey, gnn_survey_snap, gnn_social_survey}, facilitating applications such as social recommendation \citep{mc_nips17, pinsage, pinnersage}, knowledge understanding \citep{rgcn, kg_kdd19, kg_iclr20} and drug discovery \citep{gnn_drug, gnn_drug2}. 
With the numerous architectures proposed \citep{gcn_welling, graphsage, gat, gin}, it still remains an open question how to effectively scale up GNNs with respect to both the model size and graph size. 
There are two fundamental obstacles when we increase the number of GNN layers: 

\begin{itemize}[leftmargin=0.5cm]
    \setlength\itemsep{0.1em}
    \item \emph{Expressivity} challenge (\ie, oversmoothing \citep{aaai_oversmooth, Oono2020, dropedge, dropedge_journal}): \hspace{.2cm}iterative mixing of neighbor features collapses embedding vectors of different nodes into a fixed, low-dimensional subspace. 
    \item \emph{Scalability} challenge (\ie, neighbor explosion \citep{vrgcn, fastgcn, clustergcn, graphsaint}): \hspace{.2cm}
    recursive expansion of multi-hop neighborhood results in exponentially growing receptive field size (and thus computation cost). 
\end{itemize}

To address the expressivity challenge, most remedies focus on neural architecture exploration: 
\cite{gat, graphsage, gin, deepergcn} propose more expressive aggregation functions when propagating neighbor features. \cite{jknet, deepgcn, asgcn, gcnii, mixhop, break_the_ceiling, kdd_deep} use residue-style design components to construct flexible and dynamic receptive fields. Among them, \cite{jknet, deepgcn, asgcn} use skip-connection across multiple GNN layers, and \cite{gcnii, mixhop, break_the_ceiling, kdd_deep} encourage multi-hop message passing within each single layer. 
As for the scalability challenge, sampling methods have been explored to improve the training speed and efficiency. 
Importance based layer-wise sampling \cite{fastgcn, vrgcn, ladies} and subgraph-based sampling \cite{graphsaint_parallel, clustergcn, graphsaint} alleviate neighbor explosion, while preserving training accuracy. Unfortunately, such sampling methods cannot be naturally generalized to inference without accuracy loss (see also Section \ref{sec: related}). 

The above lines of research have only guided us to partial solutions. Yet what is the root cause of both the expressivity and scalability challenges?
Setting aside the design of GNN architectures or sampling schemes, we provide an alternative perspective by interpretting the \emph{data} in a different way. 

\parag{Two views on the graph.}
Given an input graph $\G$ with node set $\V$, 
the most straightforward way to understand $\G$ is by viewing it as a single \emph{global} graph. So any two nodes $u$ and $v$ belong to the same $\G$, and if $u$ and $v$ lie in the same connected component, they will ultimately see each other in their own neighborhood no matter how far away $u$ and $v$ are. 
Alternative to the above \emph{global} view, we can take a \emph{local} view on $\G$. For each node $v$, there is a \emph{latent} $\G[{[v]}]$ surrounding it which captures the characteristics of just $v$ itself.  
The full $\G$ is \emph{observed} (by the data collection process) as the union of all such $\G[{[v]}]$. 
Consequently, $\V[{[v]}]$ rather than $\V$ defines $v$'s neighborhood: if $u\not\in\V[{[v]}]$, $v$ will never consider $u$ as a neighbor. 
Our ``decoupling'' design is based on the local view. 

\parag{Scope of GNNs.}
Both the expressivity and scalability challenges are closely related to the enlargement of the GNN's scope (\ie, receptive field). 
More importantly, how we define the scope is determined by how we view $\G$. 
With the global view above, an $L$-layer GNN has the scope of the full $L$-hop neighborhood. 
With the local view, the GNN scope is simply $\V[{[v]}]$ regardless of the GNN depth. 
The two existing lines of research, one on architectural exploration and the other on sampling, both take the \emph{global} view.  
Consequently, the depth (\ie, number of layers) and scope of such GNNs are \emph{tightly coupled}.
Such coupling significantly limits the design space exploration of GNNs with various depths \cite{design_space}. 
Consider the example of {\products}, a medium-scale graph in Open Graph Benchmark \cite{ogb}. The average number of 4-hop neighbors is around 0.6M, corresponding to $25\%$ of the full graph size. 
To generate representation of a single target node, a 4-layer coupled GNN needs to propagate features from the 0.6M neighbors. 
Such propagation can be inefficient or even harmful since most nodes in the huge neighborhood would be barely relevant to the target node.

\parag{Decoupling the GNN depth and scope.}
Taking the local view on $\G$, we propose a general design principle to decouple the GNN depth and scope. 
To generate the representation of the target node $v$, we first extract from $\G$ a small subgraph $\G[{[v]}]$ surrounding $v$.
On top of $\G[{[v]}]$, we apply a GNN whose number of layers and message passing functions can be flexibly chosen. 
``Decoupling'' means we treat the scope extraction function and GNN depth as two independently tuned parameters -- effectively we introduce a new dimension in the GNN design space. 
We intuitively illustrate the benefits of decoupling by an example GNN construction, where the scope is the $L$-hop neighborhood and depth is $L'$ ($L'>L$). 
When we use more layers ($L'$) than hops ($L$), each pair of subgraph nodes may exchange messages multiple times. 
The extra message passing helps the GNN better absorb and embed the information within scope, and thus leads to higher expressivity. 
We further justify the above intuition with multifaceted theoretical analysis. 
From the graph signal processing perspective, we prove that decoupled-GCN performs local-smoothing rather than oversmoothing, as long as the scopes of different target nodes are different. 
From the function approximation perspective, we construct a linear target function on neighbor features and show that decoupling the GraphSAGE model reduces the function approximation error. 
From the topological learning perspective, we apply deep GIN-style message passing to differentiate non-regular subgraphs of a regular graph. As a result, our model is more powerful than the 1-dimensional Weisfeiler-Lehman test \cite{wl}.

\parag{Practical implementation: {\shadow}.}
The decoupling principle leads to a practical implementation, {\shadow}:  \underline{D}ecoupled GNN on a \underline{sha}ll\underline{ow} subgraph. 
In {\shadow}, the scope is a shallow yet informative subgraph, only containing a fraction of the 2- or 3-hop neighbors of $\G$ (see Section \ref{sec: exp}). On the other hand, the model of {\shadow} is deeper (\eg, $L'=5$). 
To efficiently construct the shallow scope on commodity hardware, we propose various subgraph extraction functions. 
To better utilize the subgraph node embeddings after deep message passing, we propose neural architecture extensions such as pooling and ensemble. 
Empirically, our ``decoupling'' design improves both the accuracy and scalability.
On seven benchmarks (including the largest {\papers} graph with 111M nodes) and across two graph learning tasks, {\shadow}s achieve significant accuracy gains compared to the original models. 
Meanwhile, the computation and hardware costs are reduced by orders of magnitude. 
Our code is available at \url{https://github.com/facebookresearch/shaDow_GNN}

\section{Preliminaries}
\label{sec: preliminary}
Let $\G\paren{\V, \E, \X}$ be an undirected graph, with node set $\V$, edge set $\E\subseteq\V\times \V$ and node feature matrix $\X\in\mathbb{R}^{|\V|\times d}$. 
Let $\mathcal{N}_v$ denote the set of $v$'s direct neighbors in $\G$. 
The $u$\textsuperscript{th} row of $\X$ corresponds to the length-$d$ feature of node $u$. 
Let $\bm{A}$ be the adjacency matrix of $\G$ where $A_{u,v}=1$ if edge $(u, v)\in\E$ and $A_{u,v}=0$ otherwise. 
Let $\bm{D}$ be the diagonal degree matrix of $\bm{A}$. 
Denote $\Asym=\bm{D}_*^{-\frac{1}{2}}\bm{A}_*\bm{D}_*^{-\frac{1}{2}}$ as the adjacency matrix after symmetric normalization (``$*$'' means augmented with self-edges), and $\Arw=\bm{D}^{-1}\bm{A}$ (or $\bm{D}_*^{-1}\bm{A}_*$) as the one after random walk normalization.  Let subscript ``$[u]$'' mark the quantities corresponding to a subgraph surrounding node $u$. 
For example, the subgraph itself is $\G[{[u]}]$. 
Subgraph matrices $\X_{[v]}$ and $\bm{A}_{[v]}$ have the same dimension as the original $\X$ and $\A$. 
Yet, row vector $\left[\X_{[v]}\right]_u=\bm{0}$ for $u\not\in\V[{[v]}]$. 
Element $\left[\bm{A}_{[v]}\right]_{u,w}=0$ if either $u\not\in\V[{[v]}]$ or $w\not\in\V[{[v]}]$. 
For an $L$-layer GNN, let superscript ``$(\ell)$'' denote the layer-$\ell$ quantities. Let $d^{(\ell)}$ be the number of channels for layer $\ell$; $\Hx[\ell-1]\in\mathbb{R}^{|\V|\times d^{(\ell-1)}}$ and $\Hx[\ell]\in\mathbb{R}^{|\V|\times d^{(\ell)}}$ be the input and output features.
So $\Hx[0] = \X$ and $d^{(0)}=d$. 
Further, let $\sigma$ be the activation and $\bm{W}^{\paren{\ell}}$ be the learnable weight. 
For example, a GCN layer performs $\Hx[\ell]=\sigma\paren{\Asym \Hx[\ell-1]\bm{W}^{\paren{\ell}}}$. 
A GraphSAGE layer performs $\Hx[\ell] = \sigma\paren{\Hx[\ell-1]\bm{W}_1^{\paren{\ell}} + \Arw\Hx[\ell-1]\bm{W}_2^{\paren{\ell}}}$. 

Our analysis in Section \ref{sec: method} mostly focuses on the node classification task. Yet our design can be generalized to the link prediction task, as demonstrated by our experiments in Section \ref{sec: exp}. 

\begin{definition}
(Depth of subgraph) Assume the subgraph $\G[{[v]}]$ is connected. The depth of $\G[{[v]}]$ is defined as $\max_{u\in\V[{[v]}]}d\paren{u,v}$, where $d\paren{u,v}$ denotes the shortest path distance from $u$ to $v$. 
\end{definition}

The above definition enables us to make comparison such as ``the GNN is deeper than the subgraph''. For ``decoupling the depth and scope'', we refer to the model depth rather than the subgraph depth. 

\section{Decoupling the Depth and Scope of GNNs}
\label{sec: method}

``Decoupling the depth and scope of GNNs'' is a design principle to improve the expressivity and scalability of GNNs without modifying the layer architecture. 
We name a GNN after decoupling a {\shadow} (see Section \ref{sec: shadow practical} for explanation of the name). 
Compared with a normal GNN, {\shadow} contains an additional component: the subgraph extractor $\sample$. 
To generate embedding of a target node $v$, {\shadow} proceeds as follows:
\begin{enumerate*}
\item We use $\sample\paren{v, \G}$ to return a connected $\G[{[v]}]$, where $\G[{[v]}]$ is a subgraph containing $v$, and the depth of $\G[{[v]}]$ is $L$. 
\item We build an $L'$-layer GNN on $\G[{[v]}]$ by treating $\G[{[v]}]$ as the \emph{new} full graph and by ignoring all nodes / edges not in $\G[{[v]}]$. 
\end{enumerate*}
So $\G[{[v]}]$ is the scope of {\shadow}. 
The key point reflecting ``decoupling'' is that $L'> L$. 

A normal GNN is closely related to a {\shadow}. 
Under the normal setup, an $L$-layer GNN operates on the full $\G$ and propagates the influence from all the neighbors up to $L$ hops away from $v$. 
Such a GNN is equivalent to a model where {\sample} returns the full $L$-hop subgraph and $L'=L$. 

We theoretical demonstrate how {\shadow} improves expressivity from three different angles. 
On {\shadowgcn} (Section \ref{sec: analysis gcn}), we come from the \emph{graph signal processing} perspective. 
The GCN propagation can be interpreted as applying filtering on the node signals \cite{sgc}. Deep models correspond to high-pass filters. 
Filtering the local graph $\G[{[v]}]$ preserves richer information than the global $\G$. 
On {\shadowsage} (Section \ref{sec: shadow-sage}), we view the GNN as a \emph{function approximator}. 
We construct a target function and study how decoupling reduces the approximation error. 
On {\shadowgin} (Section \ref{sec: shadow-gin}), we focus on learning \emph{topological information}. 
We show that decoupling helps capture local graph structure which the 1D Weisfeiler-Lehman test fails to capture. 

\subsection{Expressivity Analysis on {\shadowgcn}: Graph Signal Processing Perspective}
\label{sec: analysis gcn}

GCNs \cite{gcn_welling} suffer from ``oversmoothing'' \cite{aaai_oversmooth} -- 
Each GCN layer smooths the features of the direct (\ie, 1-hop) neighbors, and many GCN layers smooths the features of the full graph. 
Eventually, such repeated smoothing process propagates to any target node just the averaged feature of all $\V$. 
``Oversmoothing'' thus incurs significant information loss by wiping out all local information. 

Formally, suppose the original features $\X$ reside in a high-dimensional space $\mathbb{R}^{\size{\V}\times d}$. Oversmoothing pushes $\X$ towards a low-dimensional subspace $\mathbb{R}^{\size{\V}\times d'}$, where $d' < d$. 
Corresponding analysis comes from two perspectives: oversmoothing by \emph{a deep GCN}, and oversmoothing by \emph{repeated GCN-style propagation}. 
The former considers the full neural network with non-linear activation, weight and bias. 
The later characterizes the aggregation matrix 
$\bm{M}=\lim_{L\rightarrow \infty}\Asym^L \X$. 
It is shown that even with the vanilla architecture, a deep GCN with bias parameters does \emph{not} oversmooth \cite{dropedge_journal}. 
In addition, various tricks \cite{pairnorm, dropedge, gcnii} can prevent oversmoothing from the neural network perspective. 
However, a deep GCN still suffers from accuracy drop, indicating that the GCN-style propagation (rather than other GCN components like activation and bias) may be the fundamental reason causing difficulty in learning. 
Therefore, we study the asymptotic behavior of the aggregation matrix $\bm{M}$ under the normal and \textsc{shaDow} design. 
In other words, here in Section \ref{sec: analysis gcn}, we ignore the non-linear activation and bias parameters. 
Such setup is consistent with many existing literature such as \cite{aaai_oversmooth, break_the_ceiling, gcnii, pairnorm}. 

\begin{proposition}
\label{prop: shadow embedding}
$\infty$ number of feature propagation by {\shadowgcn} leads to
\begin{equation}
    \bm{m}_{[v]} = \left[e_{[v]}\right]_v\cdot \paren{\bm{e}_{[v]}^\trans\X_{[v]}}
\end{equation} 
where $\bm{e}_{[v]}$ is defined by $\left[e_{[v]}\right]_u=\sqrt{\frac{\degv[v]{u}}{\sum_{w\in\V[{[v]}]}\degv[v]{w}}}$; $\degv[v]{u}$ returns the degree of $u$ in $\G[{[v]}]$ plus 1. 
\end{proposition}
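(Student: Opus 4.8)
The plan is to reduce the statement to a spectral analysis of the self-loop-augmented, symmetric normalized adjacency matrix $\Asym[{[v]}]$ of the extracted subgraph, and then to read off the limiting aggregation matrix $\bm{M}_{[v]}=\lim_{L\to\infty}\Asym[{[v]}]^L\X_{[v]}$ one row at a time. Since the propagation treats $\G[{[v]}]$ as the new full graph and ignores everything outside it, I would carry out the whole argument inside the $\size{\V[{[v]}]}$-dimensional coordinate subspace indexed by $\V[{[v]}]$; the zero rows and columns of $\Asym[{[v]}]$ and the zero rows of $\X_{[v]}$ outside this subspace never contribute, so no generality is lost. Throughout, $\bm{A}_*$ and $\bm{D}_*$ denote the self-loop-augmented adjacency and diagonal degree matrices of $\G[{[v]}]$, so that the diagonal entries of $\bm{D}_*$ are exactly the quantities $\degv[v]{u}$.

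First I would collect the relevant spectral facts. As $\Asym[{[v]}]=\bm{D}_*^{-1/2}\bm{A}_*\bm{D}_*^{-1/2}$ is real symmetric, it has an orthonormal eigenbasis and all its eigenvalues lie in $[-1,1]$. A direct computation exhibits the top eigenvector: $\Asym[{[v]}]\bm{D}_*^{1/2}\bm{1}=\bm{D}_*^{-1/2}\bm{A}_*\bm{1}=\bm{D}_*^{-1/2}\bm{D}_*\bm{1}=\bm{D}_*^{1/2}\bm{1}$, using that the row sums of $\bm{A}_*$ equal the augmented degrees. Normalizing $\bm{D}_*^{1/2}\bm{1}$ to unit Euclidean length produces precisely $\bm{e}_{[v]}$, whose $u$-th entry is $\sqrt{\degv[v]{u}/\sum_{w\in\V[{[v]}]}\degv[v]{w}}$.

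The crux is to show that $+1$ is the unique eigenvalue of maximum modulus, so that every other eigenvalue is strictly inside $(-1,1)$ and the powers $\Asym[{[v]}]^L$ converge. Viewing $\Asym[{[v]}]$ as a nonnegative matrix, connectivity of $\G[{[v]}]$ makes it irreducible, so by Perron--Frobenius the eigenvalue $1$ is simple with the positive eigenvector above. The subtle point, which I expect to be the main obstacle, is ruling out the eigenvalue $-1$: for a bipartite subgraph it would be present and $\Asym[{[v]}]^L$ would merely oscillate rather than converge. This is exactly where the self-edge augmentation is essential --- each self-loop is an odd cycle, so $\G[{[v]}]$ is non-bipartite, $\Asym[{[v]}]$ is primitive (aperiodic), and $-1$ is excluded. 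Expanding in the orthonormal eigenbasis then gives $\lim_{L\to\infty}\Asym[{[v]}]^L=\bm{e}_{[v]}\bm{e}_{[v]}^\trans$, the orthogonal projector onto the dominant eigenline.

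Finally I would multiply through by the features and extract the target row. Since $\bm{M}_{[v]}=\bm{e}_{[v]}\bm{e}_{[v]}^\trans\X_{[v]}=\bm{e}_{[v]}\paren{\bm{e}_{[v]}^\trans\X_{[v]}}$, the limiting embedding of $v$ is the $v$-th row of $\bm{M}_{[v]}$, namely $\bm{m}_{[v]}=\left[e_{[v]}\right]_v\cdot\paren{\bm{e}_{[v]}^\trans\X_{[v]}}$, which is the claimed identity. The remaining work is purely bookkeeping: confirming the normalization constant in $\bm{e}_{[v]}$ and checking that the out-of-subgraph coordinates stay zero throughout, both of which are immediate from the restriction to $\V[{[v]}]$.
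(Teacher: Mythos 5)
Your proposal is correct and follows the same overall route as the paper's proof: eigendecomposition of the symmetric normalized subgraph adjacency $\Asym_{[v]}$, convergence of its powers to the rank-one projector $\bm{e}_{[v]}\bm{e}_{[v]}^\trans$, and extraction of row $v$ to obtain $\bm{m}_{[v]} = \left[e_{[v]}\right]_v\paren{\bm{e}_{[v]}^\trans\X_{[v]}}$. The difference is in how the key spectral facts are justified. The paper cites Theorem 1 of \cite{dropedge_journal} for the strict bounds $1=\lambda_1>\lambda_2\geq\cdots\geq\lambda_N>-1$ (simplicity of $\lambda_1$ coming from connectedness of $\G[{[v]}]$, which $\sample$ guarantees) and also cites it for the fact that the dominant eigenvector has entries proportional to $\sqrt{\degv[v]{u}}$. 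You instead derive these facts from first principles: the identity $\Asym_{[v]}\bm{D}_*^{1/2}\bm{1}=\bm{D}_*^{1/2}\bm{1}$ exhibits the top eigenvector explicitly; Perron--Frobenius applied to the irreducible nonnegative matrix (irreducible precisely because the extracted subgraph is connected) gives simplicity of the eigenvalue $1$; and primitivity coming from the self-loops rules out $-1$, which is exactly the bipartite obstruction to convergence. This buys a self-contained argument that makes the role of the self-edge augmentation explicit---without it, a bipartite subgraph would make $\Asym_{[v]}^L$ oscillate rather than converge---a point the paper leaves buried inside the citation; the paper's version is correspondingly shorter. Your bookkeeping for the padded full-dimension matrices, restricting the whole analysis to the coordinates of $\V[{[v]}]$, is also sound and matches the paper's implicit convention.
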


\parag{Oversmoothing by normal GCN propagation.}
With a large enough $L$, the full $L$-hop neighborhood becomes $\V$ (assuming connected $\G$). So $\forall\; u, v$, we have $\G[{[u]}] = \G[{[v]}]=\G$, implying $\bm{e}_{[u]}=\bm{e}_{[v]}$ and $\X_{[u]}=\X_{[v]}=\X$. 
From Proposition \ref{prop: shadow embedding},
the aggregation converges to a point where \emph{no} feature and \emph{little} structural information of the target is preserved. The only information in $\bm{m}_{[v]}$ is $v$'s degree. 

\parag{Local-smoothing by {\shadowgcn} propagation.} With a fixed subgraph, 
no matter how many times we aggregate using $\Asym_{[v]}$, the layers will not include the faraway irrelevant nodes. 
From Proposition \ref{prop: shadow embedding}, $\bm{m}_{[v]}$ is a linear combination of the neighbor features $\X_{[v]}$. 
Increasing the number of layers only pushes the \emph{coefficients} of each neighbor features to the stationary values. The \emph{domain} $\X_{[v]}$ of such linear transformation is solely determined by {\sample} and is independent of the model depth. 
Intuitively, if {\sample} picks non-identical subgraphs for two nodes $u$ and $v$, the aggregations should be different due to the different domains of the linear transformation. 
Therefore, {\shadowgcn} preserves \emph{local} feature information whereas normal GCN preserves \emph{none}. 
For structural information in $\bm{m}_{[v]}$, note that $\bm{e}_{[v]}$ is a normalized degree distribution of the subgraph around $v$, and $\left[e_{[v]}\right]_v$ indicates the role of the target node in the subgraph. 
By simply letting {\sample} return the $1$-hop subgraph,  $\left[e_{[v]}\right]_v$ alone already contains all the information preserved by a normal GCN, which is $v$'s degree in $\G$. 
For the general {\sample}, $\bm{e}_{[v]}$ additionally reflects $v$'s ego-net structure.
Thus, a deep {\shadowgcn} preserves more structural information than a deep GCN. 

\begin{theorem}
\label{thm: non-oversmoothing}
Let $\overline{\bm{m}}_{[v]}=\phi_{\G}\paren{v}\cdot\bm{m}_{[v]}$ 
where $\phi_{\G}$ is any non-zero function only depending on the structural property of $v$. 
Let $\mathcal{M}=\set{\overline{\bm{m}}_{[v]}\given v\in\V}$. 
Given $\G$, {\sample} and some continuous probability distribution in $\mathbb{R}^{\size{\V}\times d}$ to generate $\X$, then $\overline{\bm{m}}_{[v]}\neq \overline{\bm{m}}_{[u]}$ if $\V[{[u]}]\neq \V[{[v]}]$, almost surely. 
\end{theorem}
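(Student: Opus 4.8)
The plan is to read off from Proposition~\ref{prop: shadow embedding} an explicit expression for $\overline{\bm{m}}_{[v]}$ as a fixed (non-random) linear combination of the scope nodes' feature rows, and then argue that whenever two scopes differ as node sets, the two linear combinations differ for almost every feature matrix. First I would expand $\bm{e}_{[v]}^\trans\X_{[v]}=\sum_{w\in\V[{[v]}]}\left[e_{[v]}\right]_w\left[\X\right]_w$ (where $\left[\X\right]_w$ is the feature row of $w$, which agrees with $\left[\X_{[v]}\right]_w$ on the scope), so that $\overline{\bm{m}}_{[v]}=\sum_{w\in\V[{[v]}]}c^{(v)}_w\left[\X\right]_w$ with coefficients $c^{(v)}_w=\phi_{\G}\paren{v}\cdot\left[e_{[v]}\right]_v\cdot\left[e_{[v]}\right]_w$. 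The point of this rewriting is that the $c^{(v)}_w$ depend only on $\G$ and {\sample} (not on $\X$), and each is \emph{nonzero}: $\phi_{\G}\paren{v}\neq 0$ by hypothesis, while $\left[e_{[v]}\right]_v$ and $\left[e_{[v]}\right]_w$ are strictly positive because $\degv[v]{\cdot}$ (degree plus one) is always at least $1$, and $v$ together with every $w$ under consideration lies in $\V[{[v]}]$.

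Next I would form the difference. Extending $c^{(v)}_w:=0$ for $w\notin\V[{[v]}]$, we have $\overline{\bm{m}}_{[v]}-\overline{\bm{m}}_{[u]}=\sum_{w\in\V}\paren{c^{(v)}_w-c^{(u)}_w}\left[\X\right]_w$. The structural hypothesis $\V[{[u]}]\neq\V[{[v]}]$ enters here: pick any $w^\ast$ in the symmetric difference, say $w^\ast\in\V[{[v]}]\setminus\V[{[u]}]$; then $c^{(v)}_{w^\ast}\neq 0$ while $c^{(u)}_{w^\ast}=0$, so the coefficient $\gamma_{w^\ast}:=c^{(v)}_{w^\ast}-c^{(u)}_{w^\ast}$ is nonzero, and hence the coefficient vector $\bm{\gamma}=\paren{\gamma_w}_{w\in\V}$ is not identically zero. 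This is the crux of the argument, and it is where I expect the only real subtlety to lie: I must ensure the difference in node sets cannot be silently cancelled, which the symmetric-difference choice guarantees, since a node lying in exactly one scope contributes a nonzero coefficient from exactly one of the two terms, with nothing to offset it.

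Finally I would invoke the continuity of the feature distribution. For each coordinate $j\in\set{1,\dots,d}$, the $j$-th entry of $\overline{\bm{m}}_{[v]}-\overline{\bm{m}}_{[u]}$ equals $\sum_{w}\gamma_w X_{w,j}$, and since $\bm{\gamma}\neq\bm{0}$ this is a nontrivial linear functional of the entries $\set{X_{w,j}}_{w\in\V}$. Its zero set is a proper affine hyperplane, to which any continuous (absolutely continuous) distribution on $\mathbb{R}^{\size{\V}\times d}$ assigns measure zero. The event $\set{\overline{\bm{m}}_{[v]}=\overline{\bm{m}}_{[u]}}$ is contained in this single-coordinate event, so it has probability zero; thus $\overline{\bm{m}}_{[v]}\neq\overline{\bm{m}}_{[u]}$ almost surely for the fixed pair. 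Since $\V$ is finite there are finitely many pairs, so a union bound over all pairs with distinct scopes preserves the null-probability conclusion, yielding the statement simultaneously for every such pair almost surely.
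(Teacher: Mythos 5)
Your proof is correct and follows essentially the same route as the paper's: express $\overline{\bm{m}}_{[v]}$ via Proposition \ref{prop: shadow embedding} as a linear combination of feature rows whose coefficients depend only on $\G$ and {\sample}, note that distinct scopes make the difference $\overline{\bm{m}}_{[v]}-\overline{\bm{m}}_{[u]}$ a nonzero ($\X$-independent) linear functional of $\X$, and conclude because a continuous feature distribution places zero mass on the finitely many resulting hyperplanes (the paper collects finitely many $\bm{\epsilon}$ vectors, you union-bound over finitely many node pairs --- the same finiteness argument). If anything, you are more explicit than the paper on the one step it glosses over, namely that a node $w^\ast$ in the symmetric difference of the two scopes forces a nonzero coefficient because the entries $\left[e_{[v]}\right]_w$ are strictly positive on $\V[{[v]}]$, so nothing is missing.
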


\begin{corollary}
\label{coro: size-n sampler}
Consider ${\sample}_1$, where $\forall v\in\V,\;\size{\V[{[v]}]}\leq n$. Then $\size{\mathcal{M}}\geq \left\lceil{\frac{\size{\V}}{n}}\right\rceil$ a.s. 
\end{corollary}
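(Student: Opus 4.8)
The plan is to combine the injectivity furnished by Theorem~\ref{thm: non-oversmoothing} with an elementary covering argument on the scope vertex sets. I would first read Theorem~\ref{thm: non-oversmoothing} in its contrapositive form: almost surely, $\overline{\bm{m}}_{[u]}=\overline{\bm{m}}_{[v]}$ implies $\V[{[u]}]=\V[{[v]}]$. In words, any two target nodes that share an embedding must share the same scope vertex set. Let $K$ denote the number of \emph{distinct} sets among $\set{\V[{[v]}]\given v\in\V}$. Choosing one representative node for each distinct vertex set yields $K$ nodes whose scopes are pairwise different, hence whose embeddings are pairwise different (a.s.); therefore $\size{\mathcal{M}}\geq K$ almost surely.

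The second step is to lower-bound $K$ deterministically. Since ${\sample}_1$ returns a connected subgraph containing its argument, every node satisfies $v\in\V[{[v]}]$, so the union of all scope vertex sets (equivalently, the union of the $K$ distinct ones) is exactly $\V$. By the hypothesis on ${\sample}_1$, each of these $K$ sets has cardinality at most $n$, so their union has cardinality at most $Kn$. Covering all of $\V$ then forces $Kn\geq\size{\V}$, i.e.\ $K\geq\size{\V}/n$, and since $K$ is an integer this sharpens to $K\geq\left\lceil\size{\V}/n\right\rceil$. Chaining this with the bound from the first step gives $\size{\mathcal{M}}\geq\left\lceil\size{\V}/n\right\rceil$ almost surely.

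The one point requiring genuine care is the quantifier order on ``almost surely''. Theorem~\ref{thm: non-oversmoothing} asserts distinctness of embeddings for a \emph{fixed} pair $(u,v)$ with $\V[{[u]}]\neq\V[{[v]}]$, holding off a measure-zero event in the random draw of $\X$. To transfer this to the \emph{simultaneous} statement that every pair of nodes with distinct scopes has distinct embeddings, I would take a union bound over the at most $\binom{\size{\V}}{2}$ relevant pairs; because $\V$ is finite, this is a finite union of null events and hence still null, so the representatives indeed realize $K$ distinct embeddings on a single almost-sure event. Everything after this reduction is the purely combinatorial counting above, so I anticipate no further obstacle.
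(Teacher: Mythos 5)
Your proof is correct and takes essentially the same route as the paper's: invoke Theorem~\ref{thm: non-oversmoothing} together with the observation that $v\in\V[{[v]}]$ and $\size{\V[{[v]}]}\leq n$, then apply a pigeonhole count. The paper phrases the count as ``each group of nodes sharing a common scope has at most $n$ members'' while you phrase it as a covering bound on the union of distinct scope sets, and your explicit union bound over pairs merely makes rigorous the simultaneous almost-sure statement that the paper's finite-hyperplane argument already delivers implicitly --- these are cosmetic differences, not a different proof.
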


\begin{corollary}
\label{coro: non-identical sampler}
Consider ${\sample}_2$, where $\forall\;u,v \in\V, \;\V[{[v]}]\neq\V[{[u]}]$. Then $\size{\mathcal{M}}=\size{\V}$ a.s. 
\end{corollary}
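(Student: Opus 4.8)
The plan is to derive Corollary~\ref{coro: non-identical sampler} directly from Theorem~\ref{thm: non-oversmoothing} by a finite union bound, so almost all the work is already done. First I would unpack the hypothesis on ${\sample}_2$: by assumption, for every pair of distinct nodes $u,v\in\V$ the extracted vertex sets satisfy $\V[{[u]}]\neq\V[{[v]}]$. This is precisely the precondition needed to invoke the theorem on each such pair, so the strategy is to apply Theorem~\ref{thm: non-oversmoothing} pairwise and then aggregate.

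Fixing any pair $\{u,v\}$ with $u\neq v$, since $\V[{[u]}]\neq\V[{[v]}]$ the theorem gives $\overline{\bm{m}}_{[u]}\neq\overline{\bm{m}}_{[v]}$ almost surely over the draw of $\X$ from the continuous distribution. Let $N_{u,v}$ denote the exceptional event (a subset of the sample space of $\X$) on which $\overline{\bm{m}}_{[u]}=\overline{\bm{m}}_{[v]}$; the theorem asserts that $N_{u,v}$ has probability zero.

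The single point requiring care is upgrading these per-pair a.s.\ statements to one a.s.\ statement covering all pairs at once, and this is exactly where finiteness of $\V$ is used. There are only $\binom{\size{\V}}{2}$ unordered pairs, hence finitely many null events $N_{u,v}$; their union $N=\bigcup_{u\neq v}N_{u,v}$ is therefore still null. Off $N$ — that is, almost surely — the $\size{\V}$ vectors $\overline{\bm{m}}_{[v]}$ are pairwise distinct. Since $\mathcal{M}=\set{\overline{\bm{m}}_{[v]}\given v\in\V}$ is a set, its cardinality equals the number of distinct values among these vectors, so $\size{\mathcal{M}}=\size{\V}$ a.s. I do not anticipate a genuine obstacle here: the only step that is not immediate is the union bound, and it is harmless precisely because $\V$ is finite, so the potential difficulty of intersecting infinitely many full-measure events never arises.
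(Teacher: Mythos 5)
Your proof is correct and follows essentially the same route as the paper: invoke Theorem~\ref{thm: non-oversmoothing} pairwise using the hypothesis that all extracted neighborhoods are distinct, and conclude that all aggregations are distinct. The only difference is that you make explicit the union bound over the finitely many null events $N_{u,v}$, a measure-theoretic step the paper leaves implicit (its proof of the theorem already collects all exceptional hyperplanes into one finite set $\mathcal{H}$), so your added care is harmless and consistent with the paper's argument.
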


Theorem \ref{thm: non-oversmoothing} proves {\shadowgcn} does not oversmooth: 
\begin{enumerate*}
\item A normal GCN pushes the aggregation of same-degree nodes to the same point, while {\shadowgcn} with ${\sample}_2$ ensures any two nodes (even with the same degree) have different aggregation. 
\item A normal GCN wipes out all information in $\X$ after many times of aggregation, while
{\shadowgcn} always preserves feature information. 
\end{enumerate*}
Particularly, with $\phi_{\G}\paren{v}=\paren{\delta_{[v]}(v)}^{-1/2}$, a normal GCN generates only one unique value of $\overline{\bm{m}}$ for all $v$. 
By contrast, {\shadow} generates $\size{\V}$ different values for any $\phi_{\G}$ function. 

\subsection{Expressivity Analysis on {\shadowsage}: Function Approximation Perspective}
\label{sec: shadow-sage}

We compare the expressivity by showing
\begin{enumerate*}
\item {\shadowsage} can express all functions GraphSAGE can, and 
\item {\shadowsage} can express some function GraphSAGE cannot. 
\end{enumerate*}
Recall, a GraphSAGE layer performs the following: $\bm{h}_v^{\paren{\ell}} = \sigma\paren{\paren{\bm{W}_1^{\paren{\ell}}}^\trans\bm{h}_v^{\paren{\ell-1}}+\paren{\bm{W}_2^{\paren{\ell}}}^\trans\paren{\frac{1}{\size{\mathcal{N}_v}}\sum_{u\in\mathcal{N}_v}\bm{h}_u^{\paren{\ell-1}}}}$. 
We can prove Point 1 by making an $L'$-layer {\shadowsage} identical to an $L$-layer GraphSAGE with the following steps: 
\begin{enumerate*}
\item let {\sample} return the full $L$-hop neighborhood, and 
\item set $\bm{W}_1^{\paren{\ell}}=\bm{I}$, $\bm{W}_2^{\paren{\ell}}=\bm{0}$ for $L+1\leq\ell\leq L'$. 
\end{enumerate*}
For point 2, we consider a target function:
$\tau\paren{\X, \G[{[v]}]} = C\cdot \sum_{u\in\V[{[v]}]}\delta_{[v]}\paren{u}\cdot\bm{x}_u$
for some neighborhood $\G[{[v]}]$, scaling constant $C$ and $\delta_{[v]}\paren{u}$ as defined in Proposition \ref{prop: shadow embedding}. 
An expressive model should be able to learn well this simple linear function $\tau$. 

GraphSAGE cannot learn $\tau$ accurately, while {\shadowsage} can. 
We first show the GraphSAGE case. 
Let the depth of $\G[{[v]}]$ be $L$. 
Firstly, we need GraphSAGE to perform message passing for exactly $L$ times (where such a model can be implemented by, \eg, $L$ layers or $L'$ layers with $\bm{W}_2=\bm{0}$ for $L'-L$ layers). 
Otherwise, the extra $L'-L$ message passings will propagate influence from nodes $v'\not\in\V[{[v]}]$, violating the condition that $\tau$ is independent of $v'$. 
Next, suppose GraphSAGE can learn a function $\zeta$ such that on some $\G[{[v]}]'$, we have $\zeta\paren{\G[{[v]}]'}=\tau\paren{\G[{[v]}]'}$. 
We construct another $\G[{[v]}]''$ by adding an extra edge $e$ connecting two depth-$L$ nodes in $\G[{[v]}]'$. 
Edge $e$ changes the degree distribution $\delta_{[v]}\paren{\cdot}$, and thus $\tau\paren{\G[{[v]}]'}\neq\tau\paren{\G[{[v]}]''}$. 
On the other hand, there is no way for GraphSAGE to propagate the influence of edge $e$ to the target $v$, unless the model performs at least $L+1$ message passings. 
So $\zeta\paren{\G[{[v]}]'}=\zeta\paren{\G[{[v]}]''}$ regardless of the activation function and weight parameters. 
Therefore, $\zeta\neq\tau$. 

For {\shadowsage}, let $\sample$ return $\G[{[v]}]$. 
Then the model can output $\zeta'=\left[\Arw_{[v]}^{L'}\X\right]_{v,:}$ after we
\begin{enumerate*}
\item set $\bm{W}_1^{\paren{\ell}}=\bm{0}$ and $\bm{W}_2^{\paren{\ell}}=\bm{I}$ for all layers, and 
\item either remove the non-linear activation or bypass $\relu$ by shifting $\X$ with bias. 
\end{enumerate*}
With known results in Markov chain convergence theorem \cite{mc_mixing}, we derive the following theorem by analyzing the convergence of $\Arw_{[v]}^{L'}$ when $L'\rightarrow\infty$. 

\begin{theorem}
\label{thm: shadowsage tau}
{\shadowsage} can approximate $\tau$ with error decaying exponentially with depth. 
\end{theorem}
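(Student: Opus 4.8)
The plan is to work directly with the closed-form output that {\shadowsage} produces under the prescribed weights $\bm{W}_1^{\paren{\ell}}=\bm{0}$, $\bm{W}_2^{\paren{\ell}}=\bm{I}$ and no activation, namely $\zeta' = \left[\Arw_{[v]}^{L'}\X\right]_{v,:}$, and to show that it converges to $\tau$ geometrically in $L'$. The key observation is that $\Arw_{[v]}=\bm{D}_*^{-1}\bm{A}_*$ restricted to $\G[{[v]}]$ is exactly the transition matrix of a random walk on the connected, self-loop-augmented subgraph. Hence the row $\left[\Arw_{[v]}^{L'}\right]_{v,:}$ is the distribution over $\V[{[v]}]$ of an $L'$-step walk started at $v$, and $\zeta'=\sum_{u\in\V[{[v]}]}\left[\Arw_{[v]}^{L'}\right]_{v,u}\bm{x}_u$ is the expected feature at the walk's endpoint.

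First I would identify the limit. Because $\G[{[v]}]$ is connected and the self-loops make the walk aperiodic, the chain is irreducible and aperiodic, so by the fundamental theorem of Markov chains $\Arw_{[v]}^{L'}\rightarrow \vone\,\pi^\trans$ for a unique stationary distribution $\pi$. For a random walk on a self-loop-augmented undirected graph, the stationary mass is proportional to the augmented degree, i.e.\ $\pi_u=\degv[v]{u}\big/\sum_{w\in\V[{[v]}]}\degv[v]{w}$. Substituting into $\zeta'$ yields the limit $\sum_u\pi_u\bm{x}_u = C\cdot\sum_{u\in\V[{[v]}]}\degv[v]{u}\,\bm{x}_u$ with $C=\big(\sum_{w}\degv[v]{w}\big)^{-1}$, which is precisely $\tau\paren{\X,\G[{[v]}]}$. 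This establishes that {\shadowsage} attains $\tau$ in the infinite-depth limit.

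Next I would establish the exponential rate. Since $\Arw_{[v]}$ is not symmetric, I would pass through the similarity $\Arw_{[v]}=\bm{D}_*^{-1/2}\Asym_{[v]}\bm{D}_*^{1/2}$, so that $\Arw_{[v]}$ inherits the real spectrum of the symmetric $\Asym_{[v]}$. Connectivity forces the top eigenvalue $1$ to be simple, and aperiodicity (from the self-loops) rules out an eigenvalue at $-1$, so the second-largest eigenvalue modulus satisfies $\lambda_2<1$. The Markov chain convergence theorem \cite{mc_mixing} then yields a geometric bound $\norm{\left[\Arw_{[v]}^{L'}\right]_{v,:}-\pi^\trans}\le c\,\lambda_2^{L'}$ with $c$ depending only on $\G[{[v]}]$. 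Combining with the boundedness of $\X$ gives $\norm{\zeta'-\tau}\le c'\,\lambda_2^{L'}$, i.e.\ error decaying exponentially in $L'$.

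The main obstacle I anticipate is passing cleanly from the abstract mixing statement to the embedding-space bound while handling the non-symmetry: the similarity transform introduces the factors $\bm{D}_*^{\pm1/2}$, so convergence that is geometric in the $\pi$-weighted inner product (where $\Asym_{[v]}$ acts as a symmetric contraction off its top eigenspace) must be transferred to the ordinary norm of $\zeta'$, absorbing the resulting condition-number factor $\sqrt{d_{\max}/d_{\min}}$ into $c'$. One must also verify $\lambda_2<1$ strictly; here the self-loop augmentation is exactly what guarantees aperiodicity and hence the strict spectral gap, without which the decay would fail to be geometric.
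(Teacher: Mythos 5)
Your proposal is correct and follows essentially the same route as the paper's proof: interpret $\Arw_{[v]}$ as the transition matrix of an irreducible, aperiodic (self-loop) random walk on the connected subgraph, identify the stationary/limiting distribution as the normalized augmented-degree vector so that the infinite-depth output equals $\tau$, and obtain the exponential rate from the spectral gap of the reversible chain. The only cosmetic difference is that you establish the gap explicitly via the similarity $\Arw_{[v]}=\bm{D}_*^{-1/2}\Asym_{[v]}\bm{D}_*^{1/2}$, whereas the paper delegates the rate and mixing-time bounds to the cited results in \cite{mc_mixing}, which rest on the same spectral argument for reversible chains.
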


We have the following conclusions from above: 
\begin{enumerate*}
\item {\shadowsage} is more expressive than GraphSAGE. 
\item appropriate {\sample} function improves {\shadow} expressivity,
\item There exists cases where it may be desirable to set the {\shadow} depth much larger than the subgraph depth. 
\end{enumerate*}

\subsection{Expressivity Analysis on {\shadowgin}: Topological Learning Perspective}
\label{sec: shadow-gin}
While GCN and GraphSAGE are popular architectures in practice, they are not the theoretically most discriminative ones. 
The work in \cite{gin} establishes the relation in discriminativeness between GNNs and 1-dimensional Weisfeiler-Lehman test (\ie, 1-WL). 
And GIN \cite{gin} is an example architecture achieving the same discriminativeness as 1-WL. 
We show that applying the decoupling principle can further improve the discriminativeness of such GNNs, making them more powerful than 1-WL. 

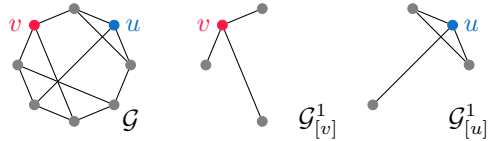
\begin{wrapfigure}{r}{0.49\textwidth}
  \vspace{-15pt}
  \begin{center}
  \definecolor{colCC1}{HTML}{0F6FCA}
\definecolor{colCC2}{RGB}{255,23,68}

\tikzset{
    archnode/.style={thick,ellipse,minimum height=2cm,minimum width=3cm,c4,draw=c4,fill=c0},
}
\begin{tikzpicture}[
  outer/.style={draw=gray,dashed,fill=green!1,thick},
  >={Stealth[inset=0pt,
  length=4pt,angle'=45,round]},scale=1.,
  sty edge sel/.style={ultra thick,color=c4},
  sty node sel/.style={ultra thick,circle,c4,draw=c4,fill=c0,minimum size=\sizenode},
  sty node unsel/.style={circle,c2,draw=c2,fill=c0,minimum size=\sizenode},
  sty edge unsel/.style={color=c2},
]

\def\R{0.75};
\def\offXa{-1.};
\def\offYa{0};
\draw ({\R*cos(0)+\offXa}, {\R*sin(0)+\offYa}) \foreach \theta in {45,90,...,360} {  -- ({\R*cos(\theta)+\offXa}, {\R*sin(\theta)+\offYa}) };
\foreach \theta in {45,90,...,360} {
    \ifthenelse{\NOT \theta = 45 \AND \NOT \theta = 135}{
      \node[inner sep=1.3pt,circle,draw=gray,fill=gray] (n\theta) at ({\R*cos(\theta)+\offXa}, {\R*sin(\theta)+\offYa}) {};
    }{
      \ifthenelse{\theta = 45} {
        \node[inner sep=1.3pt,circle,draw=colCC1,fill=colCC1] (n\theta) at ({\R*cos(\theta)+\offXa}, {\R*sin(\theta)+\offYa}) {};
      }{
        \node[inner sep=1.3pt,circle,draw=colCC2,fill=colCC2] (n\theta) at ({\R*cos(\theta)+\offXa}, {\R*sin(\theta)+\offYa}) {};
      }
    }
}

\draw (n45) -- (n225);
\draw (n90) -- (n360);
\draw (n315) -- (n180);
\draw (n270) -- (n135);

\node[color=colCC1] at ({\R*cos(45)+\offXa+0.25}, {\R*sin(45)+\offYa}) {$u$};
\node[color=colCC2] at ({\R*cos(135)+\offXa-0.25}, {\R*sin(135)+\offYa}) {$v$};

\node at ({\R*cos(270)+\offXa+0.75}, {\R*sin(270)+\offYa}) {$\G$};

\def\R{0.75};
\def\offXu{3.5};
\def\offYu{0};
\foreach \theta in {45,90,360,225} {
    \ifthenelse{\NOT \theta = 45}{
      \node[inner sep=1.3pt,circle,draw=gray,fill=gray] (u\theta) at ({\R*cos(\theta)+\offXu}, {\R*sin(\theta)+\offYu}) {};
    }{
      \node[inner sep=1.3pt,circle,draw=colCC1,fill=colCC1] (u\theta) at ({\R*cos(\theta)+\offXu}, {\R*sin(\theta)+\offYu}) {};
    }
}
\node[color=colCC1] at ({\R*cos(45)+\offXu+0.25}, {\R*sin(45)+\offYu}) {$u$};
\draw (u45) -- (u360);
\draw (u45) -- (u90);
\draw (u360) -- (u90);
\draw (u45) -- (u225);

\node at ({\R*cos(270)+\offXu+0.75}, {\R*sin(270)+\offYu}) {$\G[{[u]}]^1$};

\def\R{0.75};
\def\offXv{1.5};
\def\offYv{0};
\foreach \theta in {90,135,180,270} {
    \ifthenelse{\NOT \theta = 135}{
      \node[inner sep=1.3pt,circle,draw=gray,fill=gray] (v\theta) at ({\R*cos(\theta)+\offXv}, {\R*sin(\theta)+\offYv}) {};
    }{
      \node[inner sep=1.3pt,circle,draw=colCC2,fill=colCC2] (v\theta) at ({\R*cos(\theta)+\offXv}, {\R*sin(\theta)+\offYv}) {};
    }
}
\node at ({\R*cos(270)+\offXv+0.75}, {\R*sin(270)+\offYv}) {$\G[{[v]}]^1$};
\node[color=colCC2] at ({\R*cos(135)+\offXv-0.25}, {\R*sin(135)+\offYv}) {$v$};
\draw (v135) -- (v90);
\draw (v135) -- (v180);
\draw (v135) -- (v270);

\end{tikzpicture}
  \vspace{-5pt}
  \captionof{figure}{Example 3-regular graph and the 1-hop subgraphs of the target nodes $u$ and $v$. }
  \label{fig: shadowgin example}
  \end{center}
  \vspace{-15pt}
\end{wrapfigure}
1-WL is a graph isomorphism test aiming at distinguishing graphs of different structures. 
A GNN as expressive as 1-WL thus well captures the topological property of the target node. 
While 1-WL is already very powerful, it may still fail in some cases. \eg, it cannot distinguish certain non-isomorphic, \emph{regular} graphs. 
To understand why {\shadow} works, we first need to understand why 1-WL fails. 
In a regular graph, all nodes have the same degree, and thus the ``regular'' property describes a \emph{global} topological symmetry among nodes. 
Unfortunately, 1-WL (and the corresponding normal GNN) also operates \emph{globally} on $\G$. 
Intuitively, on two different regular graphs, there is no way for 1-WL (and the normal GNN) to assign different labels by breaking such symmetry. 

On the other hand, {\shadow} can break such symmetry by applying decoupling. In Section \ref{sec: intro}, we have discussed how {\shadow} is built from the \emph{local} perspective on the full graph. 
The key property benefiting {\shadow} is that \emph{a subgraph of a regular graph may not be regular}. 
Thus, {\shadow} can distinguish nodes in a regular graph with the non-regular subgraphs as the scope. 
We illustrate the intuition with the example in Figure \ref{fig: shadowgin example}. 
The graph $\G$ is 3-regular and we assume all nodes have identical features. 
Our goal is to discriminate nodes $u$ and $v$ since their neighborhood structures are different. 
No matter how many iterations 1-WL runs, or how many layers the normal GNN has, they cannot distinguish $u$ and $v$. 
On the other hand, a {\shadow} with 1-hop {\sample} and at least 2 layers can discriminate $u$ and $v$. 

\begin{theorem}
\label{thm: shallow-shallow}
Consider GNNs whose layer function is defined by
\begin{align}
\label{eq: thm shallow-shallow}
    \vh_v^{(\ell)} = f_1^{(\ell)}\paren{\vh_v^{(\ell-1)},\; \sum_{u \in\mathcal{N}_v} f_2^{(\ell)}\paren{\vh_v^{(\ell-1)}, \vh_u^{(\ell-1)}}},
\end{align}
where $f_1^{(\ell)}$ and $f_2^{(\ell)}$ are the update and message functions of  layer-$\ell$, implemented as MLPs. 
Then, such {\shadow} is more discriminative than the 1-dimensional Weisfeiler-Lehman test. 
\end{theorem}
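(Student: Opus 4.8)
To prove strict dominance I would establish two containments: (i) every pair of nodes that 1-WL can distinguish is also distinguished by some configuration of the \shadowgin\ in \eqref{eq: thm shallow-shallow}, and (ii) there is at least one pair that no number of 1-WL rounds separates but that a \shadowgin\ does. Part (i) yields ``at least as discriminative,'' and part (ii) upgrades this to ``strictly more discriminative.'' The regular graph of Figure~\ref{fig: shadowgin example} supplies the witness for part (ii).

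\textbf{Part (i): \shadowgin\ is at least as discriminative as 1-WL.} The layer map in \eqref{eq: thm shallow-shallow} is precisely the generic GIN-style aggregation with MLP message and update functions $f_2^{(\ell)},f_1^{(\ell)}$. To recover an ordinary GIN inside the \shadow\ family, I would let $\sample$ return the full $L$-hop neighborhood, use injective $f_1^{(\ell)},f_2^{(\ell)}$ for $\ell\leq L$, and pad the remaining layers $L+1\leq\ell\leq L'$ as identity by setting $f_2^{(\ell)}\equiv\vzero$ and $f_1^{(\ell)}(\cdot,\vzero)=\mathrm{id}$ (mirroring the identity padding used earlier for \shadowsage). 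The target's representation then equals that of an $L$-layer GIN on $\G$. By \cite{gin}, with injective functions these node representations refine nodes exactly as 1-WL does after $L$ rounds, so any pair separated by 1-WL is separated by this configuration, and the \shadow\ design space is at least as powerful as 1-WL.

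\textbf{Part (ii): a strict separation.} Take the $3$-regular $\G$ of Figure~\ref{fig: shadowgin example} with identical node features and the two targets $u,v$. First, 1-WL (and any normal GNN) fails: since $\G$ is regular and the initial colors are uniform, every node's neighbor-color multiset is $\{c,c,c\}$ in every round, so the coloring stays constant and $u,v$ are never distinguished. Next, a \shadowgin\ with the $1$-hop $\sample$ extracts $\G[{[u]}]^1$ and $\G[{[v]}]^1$, which are non-isomorphic as rooted graphs: the neighbors of $u$ induce an edge (a triangle through $u$), giving within-subgraph degrees $\{2,2,1\}$, whereas the neighbors of $v$ are mutually non-adjacent (a star), giving degrees $\{1,1,1\}$. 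With injective $f_1^{(\ell)},f_2^{(\ell)}$ and at least two layers, this difference reaches the target: after layer $1$ the neighbor representations encode these distinct within-subgraph degrees, so the layer-$1$ neighbor multisets of $u$ and $v$ differ; injectivity then forces $\vh_u^{(2)}\neq\vh_v^{(2)}$, completing the strict separation.

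\textbf{Main obstacle.} The reductions above are routine; the delicate step is guaranteeing that the MLP-implemented $f_1^{(\ell)},f_2^{(\ell)}$ can be made injective so each aggregation is lossless, exactly as in the existence argument of \cite{gin}. I would dispatch this by observing that, starting from a finite set of feature values, the representations generated by finitely many layers form a countable set, on which injective maps exist and are realizable (approximable) by MLPs. A secondary point needing care is the ``at least as powerful'' direction: one must confirm that restricting computation to a subgraph never costs discriminative power, which holds precisely because $\sample$ can recover the global receptive field by returning the full $L$-hop neighborhood, so decoupling only enlarges the design space rather than trading power away.
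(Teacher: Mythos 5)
Your proposal is correct and takes essentially the same two-part route as the paper's own proof: you simulate an $L$-layer GIN inside an $L'$-layer {\shadow} by padding the extra layers as the identity and invoke the GIN--1-WL equivalence of \cite{gin} for the ``at least as discriminative'' half, then use the same 3-regular graph of Figure~\ref{fig: shadowgin example} with the 1-hop {\sample} to exhibit a pair that 1-WL never separates but a 2-layer {\shadowgin} does. The only differences are cosmetic -- the paper places the identity layers before (rather than after) the $L$ real message passings and justifies them via universal approximation, while you make the degree-multiset computation $\set{2,2,1}$ vs.\ $\set{1,1,1}$ explicit.
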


The theorem also implies that {\shadowgin} is more discriminative than a normal GIN due to the correspondence between GIN and 1-WL. 
See Appendix \ref{appendix: proof} for the proof of all theorems in Section \ref{sec: method}. 

\subsection{Subgraph Extraction Algorithms}
\label{sec: sampler}

Our decoupling principle does not rely on specific subgraph extraction algorithms. 
Appropriate {\sample} can be customized given the characteristic of $\G$, 
and different {\sample} leads to different \emph{implementation} of our decoupling principle. 
In general, we summarize three approaches to design {\sample}: 
\begin{enumerate*}
\item \emph{heuristic based}, where we pick graph metrics that reflect neighbor importance and then design {\sample} by such metrics; 
\item \emph{model based}, where we assume a generation process on $\G$ and set {\sample} as the reverse process, and 
\item \emph{learning based}, where we integrate the design of {\sample} as part of the GNN training. 
\end{enumerate*}
In the following, we present several examples on heuristic based {\sample}, which we also empirically evaluate in Section \ref{sec: exp}. 
We leave detailed evaluation on the model based and learning based {\sample} as future work. See also Appendix \ref{appendix: sampler detail} for details. 

\noindent\textbf{Example heuristic based {\sample}.}\hspace{.3cm}
The algorithm is derived from the selected graph metrics. 
For example, with the metric being shortest path distance, we design a $L$-hop extractor. \ie, {\sample} returns the full set or randomly selected subset of the target node's $L$-hop neighbors in $\G$. 
Picking the random walk landing probability as the metric, we can design a PPR-based extractor. 
\ie, we first run the Personalized PageRank (PPR) algorithm on $\G$ to derive the PPR score of other nodes relative to the target node. Then {\sample} define $\V[{[v]}]$ by picking the top-$K$ nodes with the highest PPR scores. 
The subgraph $\G[{[v]}]$ is the \emph{node-induced subgraph}\footnote{Unlike other PPR-based models \cite{gdc, pprgo} which rewire the graph by treating top PPR nodes as direct neighbors, our PPR neighborhood preserves the original multi-hop topology by returning node-induced subgraph. } of $\G$ from $\V[{[v]}]$. 
One can easily extend this approach by using other metrics such as Katz index \cite{katz}, SimRank \cite{simrank} and feature similarity. 

\subsection{Architecture}
\label{sec: ensemble}

\noindent\textbf{Subgraph pooling.}\hspace{.3cm} 
For a normal GNN performing node classification, the multi-layer message passing follows a ``tree structure''. 
The nodes at level $L$ of the tree correspond to the $L$-hop neighborhood. And the tree root outputs the final embedding of the target node. 
Thus, there is no way to apply subgraph pooling or READOUT on the final layer output, since the ``pool'' only contains a single vector. 
For a {\shadow}, since we decouple the $L$\textsuperscript{th} layer from the $L$-hop neighborhood, it is natural to let each layer (including the final layer) output embeddings for all subgraph nodes. 
This leads to the design to READOUT all the subgraph node embeddings as the target node embedding.

We can understand the pooling for {\shadow} from another perspective. In a normal GNN, the target node at the final layer receives messages from all neighbors, but two neighbor nodes may not have a chance to exchange any message to each other (\eg, two nodes $L$-hop away from the target may be $2L$-hop away from each other). 
In our design, a {\shadow} can pass messages between \emph{any} pair of neighbors when the model depth is large enough. 
Therefore, all the subgraph node embeddings at the final layer capture meaningful information of the neighborhood. 

In summary, the power of the decoupling principle lies in that \emph{it establishes the connection between the node- / link-level task and the graph-level task}. 
\eg, to classify a node is seen as to classify the subgraph surrounding the node. 
From the neural architecture perspective, we can apply any subgraph pooling / READOUT operation originally designed for graph classification (\eg, \cite{sortpool, self_pool, spec_pool}) to enhance the node classification / link prediction of {\shadow}. 
In particular, in the vanilla {\shadow}, we can implement a trivial READOUT as 
``discarding all neighbor embeddings'', corresponding to performing center pooling. 
See Appendix \ref{appendix: pool defn} and \ref{appendix: exp pool}  for algorithm and experiments.

\noindent\textbf{Subgraph ensemble.}\hspace{.3cm}
It may be challenging in practice to design a single {\sample} capturing all meaningful characteristics of the neighborhood. 
We can use multiple {\sample} to jointly define the receptive field, 
and then ensemble multiple {\shadow} at the subgraph level. 
Consider $R$ candidates $\set*{{\sample}_i}$,  each  returning $\G[{[v]}]^i$. 
To generate $v$'s embedding, we first use $R$ branches of $L'$-layer GNN to obtain intermediate embeddings for each $\G[{v}]^i$, and then aggregate the $R$ embeddings by some learnable function $g$. 
In practice, we design $g$ as an attention based aggregation function (see Appendix \ref{appendix: arch ensemble}). 
Subgraph ensemble is useful both when $\set*{{\sample}_i}$ consists of different algorithms and when each ${\sample}_i$ performs the same algorithm under different parameters. 

\textsc{Case study}\hspace{0.3cm}
Consider PPR-based ${\sample}_i$ with different threshold $\theta_i$ on the neighbor PPR score. 
A {\shadow}-ensemble can approximate PPRGo \citep{pprgo}. 
PPRGo generates embedding as: $\bm{\xi}_v=\sum_{u\in\V[{[v]}]}\pi_u\bm{h}_v$, where $\pi_u$ is $u$'s PPR score and $\bm{h}_v=\func[MLP]{\bm{x}_v}$. 
We can partition $\V[{[v]}]=\bigcup_{i=1}^R \V[{[v]}]^i$ s.t. nodes in $\V[{[v]}]^i$ have similar PPR scores denoted by $\widetilde{\pi}_i$, and $\widetilde{\pi}_i \leq \widetilde{\pi}_{i+1}$. 
So $\bm{\xi}_v=\sum_{i=1}^R\rho_i\paren{\sum_{u\in\V[i]'}\bm{h}_u}$, where $\rho_i=\widetilde{\pi}_i - \sum_{j<i}\widetilde{\pi}_j$ and $\V[i]'=\bigcup_{k=i}^R\V[{[v]}]^k$. 
Now for each branch of {\shadow}-ensemble, let parameter $\theta_i=\widetilde{\pi}_i$ so that ${\sample}_i$ returns $\V_i'$. 
The GNN on $\V_i'$ can then learn $\sum_{u\in\V_i'}\bm{h}_u$ (\eg, by a simple ``mean'' READOUT). 
Finally, set the ensemble weight as $\rho_i$. {\shadow}-ensemble learns $\bm{\xi}_v$. 
As {\sample} also preserves graph topology, our model can be more expressive than PPRGo.

\subsection{Practical Design: {\shadow}}
\label{sec: shadow practical}

We now discuss the practical implementation of decoupled GNN -- {\shadow}. 
As the name suggests, in {\shadow}, the scope is a shallow subgraph (\ie, with depth often set to $2$ or $3$). 

In many realistic scenarios (\eg, citation networks, social networks, product recommendation graphs), a shallow neighborhood is both \emph{necessary} and \emph{sufficient} for the GNN to learn well. 
On ``sufficiency'', we consider the social network example: the friend of a friend of a friend may share little commonality with you, and close friends may be at most 2 hops away. 
Formally, by the $\gamma$-decaying theorem \citep{seal}, a shallow neighborhood is sufficient to accurately estimate various graph metrics. 
On ``necessity'', since the neighborhood size may grow exponentially with hops, 
a deep neighborhood would be dominated by nodes irrelevant to the target. 
The corresponding GNN would first need to differentiate the many useless nodes from the very few useful ones, before it can extract meaningful features from the useful nodes. 
Finally, a shallow subgraph ensures scalability by avoiding ``neighborhood explosion''. 

\parag{Remark on decoupling.}
So far we have defined a decoupled model as having the model depth $L'$ larger than the subgraph depth $L$. 
Strictly speaking, a decoupled model also admits $L'=L$. 
For example, suppose in the full $L$-hop neighborhood, there are 70\% nodes $L$ hops away. 
Applying decoupling, the {\sample} excludes most of the $L$-hop neighbors, and the resulting subgraph $\G[{[v]}]$ contains only 20\% nodes $L$ hops away. 
Then it is reasonable to consider an $L$-layer model on such a depth-$L$ subgraph as also a decouple model. 
Compared with an $L$-layer model on the full $L$-hop neighborhood, an $L$-layer model on such a depth-$L$ $\G[{[v]}]$ propagates much less information from nodes $L$ hops away. 
So the $L$ message passings are indeed decoupled from the full $L$-hop neighborhood. 

\parag{Remark on neighborhood.}
The ``sufficiency'' and ``necessity'' in shallow neighborhood are not universal. 
In many other applications, long-range dependencies can be critical, as studied in \cite{bottleneck}. 
In such cases, our practical implementation of {\shadow} would incur accuracy loss. However, our decoupling principle in general may still be beneficial -- ``shallow subgraph'' is a practical guideline rather than a theoretical requirement. 
We leave the study on such applications as future work. 

\section{Related Work}
\label{sec: related}

\noindent\textbf{Deep GNNs.}\hspace{.3cm}
To improve the GNN performance while increasing the model depth, various layer architectures have been proposed. 
AS-GCN \citep{asgcn}, DeepGCN \citep{deepgcn}, JK-net \citep{jknet}, MixHop \citep{mixhop}, Snowball \citep{break_the_ceiling}, DAGNN \citep{kdd_deep} and GCNII \cite{gcnii} all include some variants of residue connection, either across multiple layers or within a single layer. 
In principle, such architectures can also benefit the feature propagation of a deep {\shadow}, since their design does not rely on a specific neighborhood (\eg, $L$-hop). 
In addition to architectures, DropEdge \citep{dropedge} and Bayesian-GDC \cite{graph_drop_connect} propose regularization techniques by adapting dropout \citep{dropout} to graphs. Such techniques are only applied during training, and inference may still suffer from issues such as oversmoothing. 

\noindent\textbf{Sampling based methods.}\hspace{.3cm}
Neighbor or subgraph sampling techniques have been proposed to improve training efficiency. 
FastGCN \cite{fastgcn}, VR-GCN \cite{vrgcn}, AS-GCN \cite{asgcn}, LADIES \cite{ladies} and MVS-GNN \cite{mvs} sample neighbor nodes per GNN layer. 
Cluster-GCN \cite{clustergcn} and GraphSAINT \cite{graphsaint} sample a subgraph as the training minibatch. 
While sampling also changes the receptive field, all the above methods are fundamentally different from ours. 
The training samplers aim at estimating the quantities related to the full graph (\eg, the aggregation of the full $L$-hop neighborhood), and so the inference model still operates on the full neighborhood to avoid accuracy loss. 
For {\shadow}, since the decoupling principle is derived from a local view on $\G$, our {\sample} does not estimate any full neighborhood quantities. 
Consequently, the sampling based methods only improve the training efficiency, while {\shadow} addresses the computation challenge for both training and inference. 

\noindent\textbf{Re-defining the neighborhood.}\hspace{.3cm}
Various works reconstruct the original graph and apply the GNN on the re-defined neighborhood. 
GDC \cite{gdc} views the reconstructed adjacency matrix as the diffusion matrix. SIGN \cite{sign} applies reconstruction for customized graph operators. 
AM-GCN \cite{amgcn} utilizes the reconstruction to separate feature and structure information. 
The above work re-define the neighborhood. However, they still have tightly coupled depth and scope. 
{\shadow} can also work with the reconstructed graph $\G'$ by simply applying {\sample} on $\G'$. 

\section{Experiments}
\label{sec: exp}
\noindent\textbf{Setup.}\hspace{.3cm}
We evaluate {\shadow} on seven graphs. 
Six of them are for the node classification task: {\flickr} \citep{graphsaint}, {\reddit}
\citep{graphsage}, {\yelp} \citep{graphsaint}, {\arxiv}, {\products} and {\papers} \cite{ogb}. 
The remaining is for the link prediction task: {\collab} \cite{ogb}. 
The sizes of the seven graphs range from 9K nodes ({\flickr}) to  110M nodes ({\papers}). {\flickr}, {\reddit} and {\yelp} are under the inductive setting. {\arxiv}, {\products} and {\papers} are transductive. 
Consistent with the original papers, for the graphs on node classification, we measure ``F1-micro'' score for {\yelp} and ``accuracy'' for the remaining five graphs. 
For the link prediction task, we use ``Hits@50'' as the metric. 
See Appendix \ref{appendix: dataset} for details. 

We construct \textsc{shaDow} with six backbone models: GCN \citep{gcn_welling}, GraphSAGE \citep{graphsage}, GAT \citep{gat}, JK-Net \citep{jknet}, GIN \citep{gin}, SGC \citep{sgc}. 
The first five are representatives of the state-of-the-art GNN architectures, jointly covering various message aggregation functions as well as the skip connection design. 
SGC simplifies normal GCN by moving all the neighbor propagation to the pre-processing step. Therefore, SGC is suitable for evaluating oversmoothing. 
The non-\textsc{shaDow} models are trained with both full-batch and GraphSAINT-minibatch \cite{graphsaint}. 
Due to the massive size of the full $L$-hop neighborhood, we need to perform sampling when training normal GNNs in order to make the computation time tractable. 
GraphSAINT is suitable for our purpose since 
\begin{enumerate*}
\item it is the state-of-the-art minibatch method which achieves high accuracy, and
\item it supports various GNN architectures and scales well to large graphs. 
\end{enumerate*}
On the other hand, for {\shadow}, both training and inference are always executed in minibatches. 
One advantage of {\shadow} is that the decoupling enables straightforward minibatch construction: each target just independently extracts the small subgraph on its own. 

We implement two {\sample} described in Section \ref{sec: sampler}: 
\begin{enumerate*}
\item ``PPR'', where we set the node budget $K$ as $\{200, 400\}$ for the largest {\papers} and $K\leq 200$ for all other graphs;  
\item ``$L$-hop'', where we set the depth as $\{1, 2\}$. 
\end{enumerate*}
We implement various subgraph pooling functions: ``mean'' and ``max'' evaluated in this section and others evaluated in Appendix \ref{appendix: exp pool}. 
For the model depth, since $L'=3$ is the standard setting in the literature (\eg, see the benchmarking in OGB \cite{ogb}), we start from $L'=3$ and further evaluate a deeper model of $L'=5$. 
All accuracy are measured by 5 runs without fixing random seeds. 
Hyperparameter tuning and architecture configurations are in Appendix \ref{appendix: hyperparameter}.

\begin{figure}
    \centering
    \begin{minipage}{0.6\textwidth}
        \centering
        \tikzset{mark size=1}
\definecolor{c1}{HTML}{E2D726}
\definecolor{c2}{HTML}{15BD12}
\definecolor{c3}{HTML}{0F6FCA}
\definecolor{c4}{HTML}{B31090}

\def\plotwidth{0.3\textwidth}
\def\plotheight{0.25\textwidth}
\def\posylabelx{-0.2}
\def\posylabely{0.25}
\def\barwidth{5}
\def\xrotate{90}

\begin{tikzpicture}

\begin{groupplot}[
    group style={
        group size=2 by 1,
        y descriptions at=edge left,
        horizontal sep=2.5mm
    },
    scale only axis,
    height=\plotheight,
    width=\plotwidth,
    tick label style={font=\footnotesize},
    title style={font=\small,at={(axis description cs:0.5, 0.95)}},
    grid,
    label style = {font = {\fontsize{9.5 pt}{12 pt}\selectfont}},
    tick label style = {font = {\fontsize{8.5 pt}{12 pt}\selectfont}},
    ylabel={\small Ratio of nodes at different hop $\ell$},
    y label style={at={(axis description cs:\posylabelx, \posylabely)},anchor=south},
    symbolic x coords={{\flickr}, {\reddit}, {\yelp}, {\arxivshort}, {\productsshort}, {\papersshort}},
    legend style={font=\small},
    legend cell align=left,
    legend style={at={(1.3,0.9)},anchor=north,/tikz/column 4/.style={column sep=5pt},draw=none,/tikz/every even column/.append style={column sep=0.1cm}},
]

\nextgroupplot[
    enlarge x limits=0.2,ymin=0,ymax=1,
    ybar stacked,
    tick align=inside,
    xtick=data,
    x tick label style={rotate=\xrotate},
    bar width=\barwidth pt,
    ymajorgrids = true,
    yticklabel style={/pgf/number format/precision=2,/pgf/number format/fixed},
    title={4-layer GNN},
]

\addplot+[ybar,style={c1,fill=c1,mark=none}]
            plot coordinates {
                ({\flickr},.0002) 
                ({\reddit},.000) 
                ({\yelp},0.0) 
                ({\arxivshort},.0104) 
                ({\productsshort}, 0.0)
            };
\addplot+[ybar,style={c2,fill=c2,mark=none}]
             plot coordinates {
                ({\flickr}, .014) 
                ({\reddit},.0298) 
                ({\yelp},.0045) 
                ({\arxivshort},.0665)
                ({\productsshort},0.008)
            };  
\addplot+[ybar,style={c3,fill=c3,mark=none}]
             plot coordinates {
                ({\flickr}, .166) 
                ({\reddit},.3737) 
                ({\yelp},.133) 
                ({\arxivshort},.2625)
                ({\productsshort}, 0.108)
            };
\addplot+[ybar,style={c4,fill=c4,mark=none}]
             plot coordinates {
                ({\flickr}, .819) 
                ({\reddit},.5961) 
                ({\yelp},.862) 
                ({\arxivshort},.660)
                ({\productsshort},0.884)
            };

\nextgroupplot[
    enlarge x limits=0.2,ymin=0,ymax=1,
    ybar stacked,
    tick align=inside,
    xtick=data,
    x tick label style={rotate=\xrotate},
    bar width=\barwidth pt,
    ymajorgrids = true,
    yticklabel style={/pgf/number format/precision=2,/pgf/number format/fixed},
    title={Any layer \textsc{shaDow} (PPR)},
    title style={font=\small,at={(axis description cs:0.65, 0.95)}},
]
\addplot+[ybar,style={c1,fill=c1,mark=none}]
            plot coordinates {
                ({\flickr},.07) 
                ({\reddit},.409) 
                ({\yelp},0.160) 
                ({\arxivshort},.107) 
                ({\productsshort}, 0.69)
            };
\addplot+[ybar,style={c2,fill=c2,mark=none}]
             plot coordinates {
                ({\flickr}, .701) 
                ({\reddit},.557) 
                ({\yelp},.505) 
                ({\arxivshort},.658)
                ({\productsshort},0.292)
            };  
\addplot+[ybar,style={c3,fill=c3,mark=none}]
             plot coordinates {
                ({\flickr}, .209) 
                ({\reddit},.1) 
                ({\yelp},.301) 
                ({\arxivshort},.207)
                ({\productsshort},0.02)
            };
\addplot+[ybar,style={c4,fill=c4,mark=none}]
             plot coordinates {
                ({\flickr}, .093) 
                ({\reddit},.1) 
                ({\yelp},.03) 
                ({\arxivshort},.05)
                ({\productsshort},0.0)
            };
            
\legend{$\ell=1$,$\ell=2$,$\ell=3$,$\ell=4$}

\end{groupplot}
\end{tikzpicture}
        \vspace{-.3cm}
        \captionof{figure}{Neighborhood composition}
        \label{fig: hop count}
    \end{minipage}
    \hfill
    \begin{minipage}{0.35\textwidth}
        \centering
        \definecolor{colA}{RGB}{255,23,68}  
\definecolor{colB}{HTML}{B6A807}    
\definecolor{colC}{HTML}{4AAC2B}    
\definecolor{colD}{HTML}{0F6FCA}    
\definecolor{skyblue}{RGB}{213,0,0}
\definecolor{yellowgreen}{RGB}{255,23,68}
\definecolor{red1}{HTML}{FB7E78}
\definecolor{c3}{HTML}{0F6FCA}

\def\plotwidth{1.0\textwidth}
\def\plotheight{.8\textwidth}
\def\maxtime{250}
\def\minacc{80}
\def\plotW{0.59}

\def\sizemark{1.5}

\begin{tikzpicture}
\begin{axis}[
    height=\plotheight,
    width=\plotwidth,
    grid,
    label style = {font = {\fontsize{8. pt}{12 pt}\selectfont}},
    tick label style = {font = {\fontsize{8.5 pt}{12 pt}\selectfont}},
    ylabel near ticks,
    xlabel near ticks,
    enlarge x limits=false,
    xlabel=Power on $\widetilde{\bm{A}}$ or $\widetilde{\bm{A}}_{[v]}$,
    ylabel=Test accuracy,
    xlabel shift=-4pt,
    ylabel shift=-3pt,
    ymin=0.4, ymax=1,
    xmin=0, xmax=45,
    legend cell align=left,
    legend columns=3,
    legend style={
        draw=none,
        fill=none,
        nodes={scale=0.75,},
        at={(1.15, 1.35)},
        /tikz/every even column/.append style={column sep=0.07cm},
    },
]
\errorband [draw=colA, fill=colA,mark=*, mark size=\sizemark pt]{\datatable}{0}{1}{2}{
    \textsc{s}-SGC, {\fontfamily{lmtt}\selectfont F}
} 

\errorband [draw=colB, fill=colB,mark=*, mark size=\sizemark pt]{\datatable}{0}{5}{6}{
    \textsc{s}-SGC, {\fontfamily{lmtt}\selectfont R}
}

\errorband [draw=colC, fill=colC,mark=*, mark size=\sizemark pt]{\datatable}{0}{9}{10}{
    \textsc{s}-SGC, {\fontfamily{lmtt}\selectfont A}
}

\errorband [draw=colA, fill=colA, dashed,mark=*, mark size=\sizemark pt]{\datatable}{0}{3}{4}{
    SGC, {\fontfamily{lmtt}\selectfont F}
}

\errorband [draw=colB, fill=colB, dashed,mark=*, mark size=\sizemark pt]{\datatable}{0}{7}{8}{
    SGC, {\fontfamily{lmtt}\selectfont R}
}

\errorband [draw=colC, fill=colC, dashed,mark=*, mark size=\sizemark pt]{\datatable}{0}{11}{12}{
    SGC, {\fontfamily{lmtt}\selectfont A}
}

\node[color=gray] at (10, 0.87) {\scriptsize {\fontfamily{lmtt}\selectfont R}eddit};
\node[color=gray] at (25, 0.75) {\scriptsize {\fontfamily{lmtt}\selectfont A}rxiv};
\node[color=gray] at (13, 0.47) {\scriptsize {\fontfamily{lmtt}\selectfont F}lickr};

\end{axis}
\end{tikzpicture}
        \vspace{-.3cm}
        \captionof{figure}{SGC oversmoothing}
        \label{fig: sgc deep}
    \end{minipage}
    \vspace{-.5cm}
\end{figure}

\noindent\textbf{{\shadow} neighborhood.}\hspace{.3cm}
For both normal and \textsc{shaDow} GNNs, 
Figure \ref{fig: hop count} shows on average how many neighbors are $L$ hops away from the target. For a normal GNN, the size of the neighborhood grows rapidly with respect to $L$, and the nodes 4 hops away dominate the neighborhood. 
For {\shadow} using the Table \ref{tab: exp-sota} {\sample}, most neighbors concentrate within 2 hops. A small number of nodes are 3 hops away. Almost no nodes are 4 or more hops away. 
Importantly, not only does the composition of the two kinds of neighborhood differ significantly, but also the size of {\shadow} scope is much smaller (see also Table \ref{tab: exp-sota}). 
Such small subgraphs are essential to high computation efficiency. 
Finally, we can ignore the very few distant neighbors ($L\geq 4$), and regard the (effective) depth of {\shadow} subgraph as $L=2$ (or at most $3$). 
Under such practical value of $L$, a model with $L'\geq 3$ is indeed a {\shadow} (see ``Remark on decoupling'' in Section \ref{sec: shadow practical}). 

\begin{table*}[!ht]
\caption{Comparison on test accuracy / F1-micro score and inference cost (tuned with DropEdge)}
    \centering
    \resizebox{1\textwidth}{!}{
    \begin{tabular}{lccccccccccc}
        \toprule
        \multirow{2}{*}{Method} & \multirow{2}{*}{Layers} & \multicolumn{2}{c}{\flickr} & \multicolumn{2}{c}{\reddit} & \multicolumn{2}{c}{\yelp} & \multicolumn{2}{c}{\arxiv} & \multicolumn{2}{c}{\products}\\
        \cmidrule(lr){3-4}\cmidrule(lr){5-6}\cmidrule(lr){7-8}\cmidrule(lr){9-10}\cmidrule(lr){11-12}& & Accuracy & Cost & Accuracy & Cost & F1-micro & Cost & Accuracy & Cost & Accuracy & Cost\\
        \midrule
        \midrule
         \multirow{2}{*}{GCN} & 3 &
            0.5159\std{0.0017} & 2E0 & 
            0.9532\std{0.0003} & 6E1 & 
            0.4028\std{0.0019} & 2E1 & 
            0.7170\std{0.0026} & 1E1 & 
            0.7567\std{0.0018}& 5E0\\
         & 5 & 
            0.5217\std{0.0016} & 2E2 & 
            0.9495\std{0.0012} & 1E3 & 
            OOM & 1E3 & 
            0.7186\std{0.0017} & 1E3 & 
            OOM & 9E2 \\
         \multirow{2}{*}{GCN-SAINT} & 3 & 
            0.5155\std{0.0027} & 2E0 & 
            0.9523\std{0.0003} & 6E1 & 
            0.5110\std{0.0012} & 2E1 & 
            0.7093\std{0.0003} & 1E1 & 
            \textbf{0.8003}\std{0.0024} & 5E0\\
         & 5 & 0.5165\std{0.0026} & 2E2 &
            0.9523\std{0.0012} & 1E3 & 
            0.5012\std{0.0021} & 1E3 & 
            0.7039\std{0.0020} & 1E3 & 
            0.7992\std{0.0021} & 9E2\\
        \cmidrule(lr){1-12}\cellcolor{LightCyan} & 3 & 
            0.5234\std{0.0009} & \textbf{(1)} & 
            0.9576\std{0.0005} & \textbf{(1)} & 
            0.5291\std{0.0020} & \textbf{(1)} & 
            0.7180\std{0.0024} & \textbf{(1)} & 
            0.7742\std{0.0037} & \textbf{(1)}\\
         \cellcolor{LightCyan}\multirow{-2}{*}{\shortstack[l]{{\shadowgcn}\\{(PPR)}}} & 5 & 
            0.5268\std{0.0008} & 1E0 & 
            0.9564\std{0.0004} & 1E0 & 
            0.5323\std{0.0020} & 2E0 & 
            0.7206\std{0.0025} & 2E0 & 
            0.7821\std{0.0043} & 2E0 \\
        \cellcolor{HeavyCyan}{+Pooling} & 3/5 & 
            \textbf{0.5286}\std{0.0013} & 1E0 &
            \textbf{0.9624}\std{0.0002} & 1E0 &
            \textbf{0.5393}\std{0.0036} & 2E0 &
            \textbf{0.7223}\std{0.0018} & 2E0 &
            0.7914\std{0.0044} & 2E0 \\
        \midrule
        \midrule
         \multirow{2}{*}{GraphSAGE} & 3 & 
            0.5140\std{0.0014} & 3E0 & 
            0.9653\std{0.0002} & 5E1 & 
            0.6178\std{0.0033} & 2E1 & 
            0.7192\std{0.0027} & 1E1 & 
            0.7858\std{0.0013} & 4E0 \\
         & 5 & 
            0.5154\std{0.0052} & 2E2 & 
            0.9626\std{0.0004} & 1E3 & 
            OOM &2E3 & 
            0.7193\std{0.0037} & 1E3 & 
            OOM & 1E3 \\
         \multirow{2}{*}{SAGE-SAINT} & 3 & 
            0.5176\std{0.0032} & 3E0 & 
            0.9671\std{0.0003} & 5E1 & 
            0.6453\std{0.0011} & 2E1 & 
            0.7107\std{0.0003} & 1E1 & 
            0.7923\std{0.0023} & 4E0\\
         & 5 & 
            0.5201\std{0.0032} & 2E2 &
            0.9670\std{0.0010} & 1E3 & 
            0.6394\std{0.0003} & 2E3 & 
            0.7013\std{0.0021} & 1E3 & 
            0.7964\std{0.0022} & 1E3\\
        \cmidrule(lr){1-12}\cellcolor{LightYellow} & 3 & 
            0.5312\std{0.0019} & 1E0 & 
            0.9672\std{0.0003} & 1E0 & 
            0.6542\std{0.0002} & 1E0 & 
            0.7163\std{0.0028} & 1E0 & 
            0.7935\std{0.0031} & 1E0\\
         \cellcolor{LightYellow}\multirow{-2}{*}{\shortstack[l]{{\shadowsage}\\{($2$-hop)}}}& 5 & 
            0.5335\std{0.0015} & 2E0 & 
            0.9675\std{0.0005} & 2E0 & 
            0.6525\std{0.0003} & 2E0 & 
            0.7180\std{0.0030} & 2E0 & 
            0.7995\std{0.0022} & 2E0\\
        \cellcolor{LightCyan} & 3 &
            0.5356\std{0.0013} & \textbf{(1)} & 
            0.9688\std{0.0002} & \textbf{(1)} & 
            0.6538\std{0.0003} & \textbf{(1)} & 
            0.7227\std{0.0012} & \textbf{(1)} & 
            0.7905\std{0.0026} & \textbf{(1)}\\
         \cellcolor{LightCyan}\multirow{-2}{*}{\shortstack[l]{{\shadowsage}\\{(PPR)}}} & 5 & 
            \textbf{0.5417}\std{0.0006} & 2E0 & 
            0.9692\std{0.0007} & 2E0 & 
            0.6518\std{0.0002} & 2E0 & 
            0.7238\std{0.0007} & 2E0 & 
            0.8005\std{0.0040} & 2E0 \\
        \cellcolor{HeavyCyan}{+Pooling} & 3/5 & 
            0.5395\std{0.0013} & 2E0 &
            \textbf{0.9703}\std{0.0003} & 2E0 &
            \textbf{0.6564}\std{0.0004} & 1E0 &
            \textbf{0.7258}\std{0.0017} & 2E0 &
            \textbf{0.8067}\std{0.0037} & 1E0\\
        \midrule
        \midrule
         \multirow{2}{*}{GAT} & 3 & 
            0.5070\std{0.0032} & 2E1 & 
            OOM & 3E2 & 
            OOM & 2E2 & 
            0.7201\std{0.0011} & 1E2 & 
            OOM & 3E1\\
         & 5 &
            0.5164\std{0.0033} & 2E2 & 
            OOM & 2E3 & 
            OOM & 2E3 & 
            OOM & 3E3 & 
            OOM & 2E3 \\
         \multirow{2}{*}{GAT-SAINT} & 3 & 
            0.5225\std{0.0053} & 2E1 & 
            0.9671\std{0.0003} & 3E2 & 
            0.6459\std{0.0002} & 2E2 & 
            0.6977\std{0.0003} & 1E2 & 
            0.8027\std{0.0028} & 3E1\\
         & 5 & 0.5153\std{0.0034} & 2E2 &
            0.9651\std{0.0024} & 2E3 & 
            0.6478\std{0.0012} & 2E3 & 
            0.6954\std{0.0013} & 3E3 & 
            0.7990\std{0.0072} & 2E3\\
        \cmidrule(lr){1-12}\cellcolor{LightCyan} & 3 & 
            0.5349\std{0.0023} & \textbf{(1)} & 
            0.9707\std{0.0004} & \textbf{(1)} & 
            \textbf{0.6575}\std{0.0004} & \textbf{(1)} & 
            0.7235\std{0.0020} & \textbf{(1)} & 
            0.8006\std{0.0014} & \textbf{(1)}\\
        \cellcolor{LightCyan}\multirow{-2}{*}{\shortstack[l]{{\shadowgat}\\{(PPR)}}} & 5 & 
            0.5352\std{0.0028} & 2E0 & 
            \textbf{0.9713}\std{0.0004} & 2E0 & 
            0.6559\std{0.0002} & 2E0 & 
            \textbf{0.7274}\std{0.0022} & 2E0 & 
            0.8071\std{0.0004} & 2E0\\
        \cellcolor{HeavyCyan}{+Pooling} & 3/5 & 
            \textbf{0.5364}\std{0.0026} & 1E0 &
            0.9710\std{0.0004} & 2E0 &
            0.6566\std{0.0005} & 1E0 &
            0.7265\std{0.0028} & 2E0 &
            \textbf{0.8142}\std{0.0031} & 1E0 \\
        \bottomrule
    \end{tabular}
    }
    \label{tab: exp-sota}
\end{table*}

\noindent\textbf{Comparison with baselines.}\hspace{.3cm}
Table \ref{tab: exp-sota} shows the performance comparison of {\shadow} with the normal GNNs. 
All models on all datasets have uniform hidden dimension of $256$. 
We define the metric ``inference cost'' as the average amount of computation to generate prediction for one test node. 
Inference cost is a measure of computation complexity (see Appendix \ref{appendix: inf cost calculation} for the calculation) and is independent of hardware / implementation factors such as parallelization strategy, batch processing, \textit{etc}. 
For cost of {\shadow}, we do not include the overhead on computing $\sample$, since it is hard to calculate such cost analytically. 
Empirically, subgraph extraction is much cheaper than model computation (see Figure \ref{fig: ppr time}, \ref{fig: acc-time tradeoff} for time evaluation on CPU and GPU). 
During training, we apply DropEdge \citep{dropedge} to both the baseline and \textsc{shaDow} models. DropEdge helps improve the baseline accuracy by alleviating oversmoothing, and benefits {\shadow} due to its regularization effects. See Appendix \ref{appendix: other arch} for results on other architectures including GIN and JK-Net. 

\textsc{Accuracy}\hspace{0.3cm} 
We aim at answering the following questions: 
\begin{enumerate*}
\item Can we improve accuracy by decoupling a baseline model?
\item What architecture components can we tune to improve accuracy of a decoupled model?
\item What {\sample} can we tune to improve the accuracy of a decoupled model?
\end{enumerate*}
To answer Q1, we fix the backbone architecture and remove pooling. Then we inspect ``3-layer normal GNN \emph{vs.} 3-layer {\shadow}'' and ``5-layer normal GNN \emph{vs.} 5-layer {\shadow}''.  
Clearly, {\shadow}s (with scope size no more than 200) in general achieve significantly higher accuracy than the normal GNNs. 
This indicates that a shallow neighborhood contains sufficient information, and customizing the scope can benefit accuracy even without architecture changes (from Figure \ref{fig: hop count}, a depth-3 $\G[{[v]}]$ differs significantly from the 3-hop neighborhood). 
To answer Q1, we focus on the PPR {\sample} and thus compare the rows in blue background. 
We use the 3-layer {\shadow} without pooling as the baseline and analyze the effects of 
\begin{enumerate*}
\item increasing the GNN depth without expanding scope, and
\item adding subgraph pooling. 
\end{enumerate*}
Comparing among the rows in \colorbox{LightCyan}{light blue} background, we observe that in many cases, simply increasing the depth from 3 to 5 leads to significant accuracy gain. 
Comparing the \colorbox{LightCyan}{ligh blue} rows with the \colorbox{HeavyCyan}{dark blue} rows, we observe that sometimes pooling can further improve the accuracy of a {\shadow}. 
In conclusion, both types of architecture tuning are effective ways of optimizing a {\shadow}. 
Finally, to answer Q3, we compare the \colorbox{LightCyan}{light blue} rows with the \colorbox{LightYellow}{light yellow} rows. 
In general, PPR {\sample} leads to higher accuracy than $2$-hop {\sample}, demonstrating the importance of designing a good {\sample}. 

\textsc{Inference cost}\hspace{0.3cm}
Inference cost of {\shadow} is orders of magnitude lower than the normal GNNs (a 5-layer {\shadow} is still much cheaper than a 3-layer normal GNN). 
The high cost of the baselines is due to the ``neighborhood explosion'' with respect to more layers. 
{\shadow} is efficient and scalable as the cost only grows linearly with the model depth. 
Note that GraphSAINT only improves efficiency \emph{during training} since its inference operates on the full $L$-hop neighborhood. 

\begin{wraptable}{r}{7.3cm}
\vspace{-15pt}
\caption{Leaderboard comparison on {\papersshort}}
\vspace{-.3cm}
\resizebox{0.52\textwidth}{!}{
\begin{tabular}{lcccc}
    \toprule
    Method & Test accuracy & Val accuracy & Neigh size\\
    \midrule
    \midrule
    GraphSAGE+incep & 0.6706\std{0.0017} & 0.7032\std{0.0011} & 4E5\\
    SIGN-XL & 0.6606\std{0.0019} & 0.6984\std{0.0006} & $>$ 4E5\\
    SGC & 0.6329\std{0.0019} & 0.6648\std{0.0020} & $>$ 4E5\\
    \midrule
    {\shadowgat}\textsubscript{200} & 0.6681\std{0.0016} & 0.7019\std{0.0011} & 2E2\\
    {\shadowgat}\textsubscript{400} & \textbf{0.6708}\std{0.0017} & \textbf{0.7073}\std{0.0011} & 3E2\\
    \bottomrule
\end{tabular}

}
\label{tab: exp-papers}
\vspace{-.2cm}
\end{wraptable}
\noindent\textbf{Scaling to 100 million nodes.}\hspace{.3cm}
We further scale {\shadow} to {\papers}, one of the largest public dataset. 
Even through the full graph size is at least two orders of magnitude larger than the graphs in Table \ref{tab: exp-sota}, the localized scope of {\shadow} barely needs to increase. 
Since {\shadow} performs minibatch computation, a low-end GPU with limited memory capacity can compute {\shadow} on {\papers} efficiently. 
We show in Appendix \ref{appendix: exp mem speed} that we can train and inference our model with as little as 4GB GPU memory consumption. This is infeasible using normal GNNs. 
Table \ref{tab: exp-papers} summarizes our comparison with the top leaderboard methods \cite{dgl,sign,sgc}. 
We only include those methods that do not use node labels as the model input (\ie, the most standard setup). 
We achieve at least 3 orders of magnitude reduction in neighborhood size without sacrificing accuracy. 
For SIGN-XL and SGC, their neighborhood is too large to count the exact size. 
Also, their preprocessing consumes $5\times$ more CPU memory than {\shadow} (Appendix \ref{appendix: exp mem speed}).

\begin{wraptable}{r}{6.2cm}
\vspace{-15pt}
\caption{Leaderboard comparison on {\collabshort}}
\vspace{-.3cm}
\resizebox{0.45\textwidth}{!}{
\begin{tabular}{lccc}
    \toprule
    Method & Test Hits@50 & Val Hits@50\\
    \midrule
    \midrule
    SEAL & 0.5371\std{0.0047} & 0.6389\std{0.0049}\\
    DeeperGCN & 0.5273\std{0.0047} & 0.6187\std{0.0045}\\
    LRGA+GCN & 0.5221\std{0.0072} & 0.6088\std{0.0059}\\
    \midrule
    {\shadowsage} & \textbf{0.5492}\std{0.0022} & \textbf{0.6524}\std{0.0017}\\
    \bottomrule
\end{tabular}

}
\label{tab: exp-collab}
\vspace{-.2cm}
\end{wraptable}
\noindent\textbf{Extending to link-level task.}\hspace{.3cm}
We further show that {\shadow} is general and can be extended to the link prediction task. 
There are two settings of {\collab}. We follow the one where validation edges cannot be used in training updates. This is the setting which most leaderboard methods follow. 
Table \ref{tab: exp-collab} shows the comparison with the top GNN models under the same setting. 
{\shadowsage} outperforms the \emph{rank-1} model with significant margin.

\noindent\textbf{Oversmoothing.}\hspace{.3cm}
To validate Theorem \ref{thm: non-oversmoothing}, we pick SGC as the backbone architecture. 
SGC with power $L$ is equivalent to $L$-layer GCN without activation. 
Performance comparison between SGC and {\shadowsgc} thus reveals the effect of oversmoothing without introducing other factors due to optimizing deep neural networks (e.g., vanishing gradients). 
In Figure \ref{fig: sgc deep},  we vary the power of SGC and {\shadowsgc} from 1 to 40 (see Appendix \ref{appendix: deep setup} for details). 
While SGC gradually collapses local information into global ``white noise'', accuracy of {\shadowsgc} does not degrade. 
This validates our theory that extracting local subgraphs prevents oversmoothing. 

\section{Conclusion}

We have presented a design principle to decouple the depth and scope of GNNs. 
Applying such a principle on various GNN architectures simultaneously improves expressivity and computation scalability of the corresponding models. 
We have presented thorough theoretical analysis on expressivity from three different perspectives, 
and also rich design components (\eg, subgraph extraction functions, architecture extensions) to implement such design principle. 
Experiments show significant performance improvement over a wide range of graphs, GNN architectures and learning tasks.

\newpage

\newpage
\appendix

\section{Proofs}
\label{appendix: proof}

\subsection{Proof on {\shadowgcn} Expressivity}
\begin{proof}[Proof of Proposition \ref{prop: shadow embedding}]
The GCN model performs symmetric normalization on the adjacency matrix. 
{\shadowgcn} follows the same way to normalize the subgraph adjacency matrix as:
\begin{align}
\Asym_{[v]} = \paren{\bm{D}_{[v]}+\bm{I}_{[v]}}^{-\frac{1}{2}}\cdot \paren{\bm{A}_{[v]}+\bm{I}_{[v]}}\cdot \paren{\bm{D}_{[v]}+\bm{I}_{[v]}}^{-\frac{1}{2}}
\end{align}
where $\bm{A}_{[v]}\in\mathbb{R}^{N\times N}$ is the binary adjacency matrix for $\G[{[v]}]$. 

$\Asym_{[v]}$ is a real symmetric matrix and has the largest eigenvalue of $1$. Since {\sample} ensures the subgraph $\G[{[v]}]$ is connected, so the multiplicity of the largest eigenvalue is 1. By Theorem 1 of \cite{dropedge_journal}, we can bound the eigenvalues $\lambda_i$ by $1 = \lambda_1 >\lambda_2 \geq \hdots \geq \lambda_N > -1$. 

Performing eigen-decomposition on $\Asym_{[v]}$, we have 
\begin{align}
    \Asym_{[v]} = \bm{E}_{[v]}\bm{\Lambda}\bm{E}_{[v]}^{-1} = \bm{E}_{[v]}\bm{\Lambda}\bm{E}_{[v]}^\trans
\end{align}

where $\bm{\Lambda}$ is a diagonal matrix $\Lambda_{i,i} = \lambda_i$ and matrix $\bm{E}_{[v]}$  consists of all the normalized eigenvectors. 

We have:
\begin{align}
    \Asym_{[v]}^L = \bm{E}_{[v]}\bm{\Lambda}^L\bm{E}_{[v]}^\trans
\end{align}

Since $\size{\lambda_i} < 1$ when $i\neq 1$, we have $\lim_{L\rightarrow \infty}\Asym_{[v]}^L = \bm{e}_{[v]}\bm{e}_{[v]}^\trans$, where $\bm{e}_{v}$ is the eigenvector corresponding to $\lambda_1$. 
It is easy to see that $\left[\bm{e}_{[v]}\right]_{u}\propto \sqrt{\delta_{[v]}(u)}$ \citep{dropedge_journal}. After normalization, $\left[\bm{e}_{[v]}\right]_{u} = \sqrt{\frac{\delta_{[v]}(u)}{\sum_{w\in\V[{[v]}]}\delta_{[v]}(w)}}$.

It directly follows that $\bm{m}_{[v]} = \left[e_{[v]}\right]_v\cdot \paren{\bm{e}_{[v]}^\trans\X_{[v]}}$, with value of $\bm{e}_{[v]}$ defined above. 
\end{proof}

\begin{proof}[Proof of Theorem \ref{thm: non-oversmoothing}]

We first prove the case of $\overline{\bm{m}}_{[v]}=\bm{m}_{[v]}$. \emph{i.e.}, $\phi_{\G}(v) = 1$. 

According to Proposition \ref{prop: shadow embedding}, the aggregation for each target node equals $\bm{m}_{[v]} = \left[e_{[v]}\right]_v\bm{e}_{[v]}^\trans\X_{[v]}$. 
Let $N=\size{\V}$. Let $\ee_{[v]}\in\mathbb{R}^{N\times 1}$ be a ``padded'' vector from $\bm{e}_{[v]}$, such that the $u^{\text{th}}$ element is $\left[e_{[v]}\right]_u$ if $u \in \V[{[v]}]$, and 0, otherwise. Therefore, $\bm{m}_{[v]} = \left[e_{[v]}\right]_v \ee_{[v]}^\trans\X$. Then, the difference in aggregation of two nodes $u$ and $v$ is given by
\begin{align}
    \bm{m}_{[v]} - \bm{m}_{[u]} 
    &= \left[e_{[v]}\right]_v \ee_{[v]}^\trans\X - \left[e_{[u]}\right]_u \ee_{[u]}^\trans\X \\
&= \bm{\epsilon}^\trans \X,
\end{align}

where
$
\bm{\epsilon} = \left[e_{[v]}\right]_{v}\cdot\ee_{[v]}^\trans - \left[e_{[u]}\right]_{u}\cdot \ee_{[u]}^\trans
$.

When two nodes $u$ and $v$ have identical neighborhood as $\V[{[u]}]=\V[{[v]}]$, then the aggregation vectors are identical as $\bm{\epsilon}=\bm{0}$. However, when two nodes have different neighborhoods, we claim that they almost surely have different aggregations. Let us assume the contrary, \emph{i.e.},
for some $v$ and $u$ with $\V[{[v]}]\neq \V[{[u]}]$, their aggregations are the same: $\bm{m}_{[v]}=\bm{m}_{[u]}$. 
Then we must have $\bm{\epsilon}^\trans \X=\bm{0}$. 

Note that, given $\G$, {\sample} and some continuous distribution to generate $\X$, there are only \emph{finite} values for $\bm{\epsilon}$. 
The reasons are that 
\begin{enumerate*}
    \item $\G$ is finite due to which there are only finite possible subgraphs and, 
    \item even though $\X$ can take infinitely many values, $\bm{\epsilon}$ does not depend on $\X$. 
\end{enumerate*}
Each of such $\bm{\epsilon}\neq \bm{0}$ defines a hyperplane in $\mathbb{R}^{N}$ by $\bm{\epsilon}\cdot \bm{x} = \bm{0}$ (where $\bm{x}\in\mathbb{R}^{N}$).
Let $\mathcal{H}$ be the finite set of all such hyperplanes. 

In other words, $\forall i, \;\X_{:, i}$ must fall on one of the hyperplanes in $\mathcal{H}$. 
However, since $\X$ is generated from a continuous distribution in $\mathbb{R}^{N\times f}$, $\X_{:, i}$ almost surely does not fall on any of the hyperplanes in $\mathcal{H}$. 
Therefore, for any $v$ and $u$ such that $\V[{[v]}]\neq \V[{[u]}]$, we have $\bm{m}_{[v]}\neq \bm{m}_{[u]}$ almost surely. 

For a more general $\phi_{\G}(v)$ applied on $\bm{m}_{[v]}$, since $\phi_{\G}$ does not depend on $\X$, the proof follows exactly the same steps as above. 

\end{proof}

\begin{proof}[Proof of Corollary \ref{coro: size-n sampler}]
Note that any subgraph contains the target node itself. \ie, $\forall u\in\V, \; u\in \V[{[u]}]$. 
Therefore, for any node $v$, there are at most $n-1$ other nodes in $\V$ with the same neighborhood as $\V[{[v]}]$. Such $n-1$ possible nodes are exactly those in $\V[{[v]}]$. 

By Theorem \ref{thm: non-oversmoothing}, $\forall v\in \V$, there are at most $n-1$ other nodes in $\V$ having the same aggregation as $\bm{m}_{[v]}$. 
Equivalently, total number of possible aggregations is at least $\ceil{\size{\V}/n}$. 
\end{proof}

\begin{proof}[Proof of Corollary \ref{coro: non-identical sampler}]
By definition of ${\sample}_2$, any pair of nodes has non-identical neighborhood. By Theorem \ref{thm: non-oversmoothing}, any pair of nodes have non-identical aggregation. Equivalently, all nodes have different aggregation and $\size{\mathcal{M}}=\size{\V}$. 
\end{proof}

\subsection{Proof on {\shadowsage} Expressivity}

In Section \ref{sec: shadow-sage}, we have already shown why the normal GraphSAGE model cannot learn the $\tau$ function. 
Here we illustrate the idea with an example in Figure \ref{fig: sage failure example}. 
The neighborhood $\G[{[v]}]'$ (or, $\G[{[v]}]''$) is denoted by the solid lines as the edge set $\E[{[v]}]'$ (or, $\E[{[v]}]''$) and solid nodes as the node set $\V[{[v]}]'$ (or, $\V[{[v]}]''$). 
The red dashed edge and the red node $v'$ is in the full graph $\G$ but outside $\G[{[v]}]'$ or $\G[{[v]}]''$. 
Due to such $v'$, candidate GraphSAGE models approximating $\tau$ cannot have message passing for more than 2 times. 
Further, $\G[{[v]}]''$ differs from $\G[{[v]}]'$ by the green edge $e$ connecting two 2-hop neighbors of the target $v$. 
The influence of $e$ cannot be propagated to target $v$ by only two message passings. 
Thus, as discussed in Section \ref{sec: shadow-sage}, there is no way for GraphSAGE to learn the $\tau$ function. 

\begin{figure}
    \centering
    \definecolor{c0}{HTML}{f5f5f5}
\definecolor{c1}{HTML}{e5e5e5}
\definecolor{c2}{HTML}{d5d5d5}
\definecolor{c3}{HTML}{616161}
\definecolor{c4}{HTML}{424242}
\definecolor{r}{HTML}{FF5733}
\definecolor{g}{HTML}{0F912C}
\definecolor{bluelight}{HTML}{0F6FCA}

\tikzset{
    archnode/.style={thick,ellipse,minimum height=2cm,minimum width=3cm,c4,draw=c4,fill=c0},
}
\begin{tikzpicture}[
    outer/.style={draw=gray,dashed,fill=green!1,thick},
    >={Stealth[inset=0pt,
    length=4pt,angle'=45,round]},scale=1.,
    sty edge sel/.style={ultra thick,color=c4},
    sty node sel/.style={ultra thick,circle,c4,draw=c4,fill=c0,minimum size=\sizenode},
    sty node unsel/.style={circle,c2,draw=c2,fill=c0,minimum size=\sizenode},
    sty edge unsel/.style={color=c2},
]

\def\lena{0.5};
\def\lenb{0.5};
\def\xshift{6};

\draw (0, 0) node[circle, inner sep=1.5pt, fill=bluelight] (v) {};

\def\xa{\lena};
\def\ya{0};
\pgfmathsetmacro{\xb}{\xa + \lenb*cos(45)}%
\pgfmathsetmacro{\yb}{\ya + \lenb*sin(45)}%
\pgfmathsetmacro{\xc}{\xa + \lenb*cos(45)}%
\pgfmathsetmacro{\yc}{\ya - \lenb*sin(45)}%
\foreach \angle in {0,120,240} {
    \draw ({cos(\angle)*\xa-sin(\angle)*\ya}, {sin(\angle)*\xa+cos(\angle)*\ya}) node[circle, inner sep=1.5pt, fill=c4] (a\angle) {};
    \draw [fill=c4] (a\angle) -- (v);
    \draw ({cos(\angle)*\xb-sin(\angle)*\yb}, {sin(\angle)*\xb+cos(\angle)*\yb}) node[circle, inner sep=1.5pt, fill=c4] (b\angle) {};
    \draw [fill=c4] (b\angle) -- (a\angle);
    \draw ({cos(\angle)*\xc-sin(\angle)*\yc}, {sin(\angle)*\xc+cos(\angle)*\yc}) node[circle, inner sep=1.5pt, fill=c4] (c\angle) {};
    \draw [fill=c4] (c\angle) -- (a\angle);
}
\node[color=bluelight] at (-0.3, 0.) {$v$};
\draw (-1.3, -.4) node[circle, inner sep=1.5pt, draw=red] (other 1) {};
\node[red] at (-1.6, -.4) {$v'$};
\draw[red, dashed] (other 1) -- (c240);
\node at (-1.2, 0.25) {$\G[{[v]}]'$};

\draw ({0+\xshift}, 0) node[circle, inner sep=1.5pt, fill=bluelight] (vs) {};

\def\xa{\lena};
\def\ya{0};
\pgfmathsetmacro{\xb}{\xa + \lenb*cos(45)}%
\pgfmathsetmacro{\yb}{\ya + \lenb*sin(45)}%
\pgfmathsetmacro{\xc}{\xa + \lenb*cos(45)}%
\pgfmathsetmacro{\yc}{\ya - \lenb*sin(45)}%
\foreach \angle in {0,120,240} {
    \draw ({cos(\angle)*\xa-sin(\angle)*\ya+\xshift}, {sin(\angle)*\xa+cos(\angle)*\ya}) node[circle, inner sep=1.5pt, fill=c4] (as\angle) {};
    \draw [fill=c4] (as\angle) -- (vs);
    \draw ({cos(\angle)*\xb-sin(\angle)*\yb+\xshift}, {sin(\angle)*\xb+cos(\angle)*\yb}) node[circle, inner sep=1.5pt, fill=c4] (bs\angle) {};
    \draw [fill=c4] (bs\angle) -- (as\angle);
    \draw ({cos(\angle)*\xc-sin(\angle)*\yc+\xshift}, {sin(\angle)*\xc+cos(\angle)*\yc}) node[circle, inner sep=1.5pt, fill=c4] (cs\angle) {};
    \draw [fill=c4] (cs\angle) -- (as\angle);
}
\draw ({-1.3+\xshift}, -.4) node[circle, inner sep=1.5pt, draw=red] (other 2) {};
\draw[red, dashed] (other 2) -- (cs240);
\node[color=bluelight] at ({-0.3+\xshift}, 0.) {$v$};
\draw [green!40!black] (bs240) -- (cs0) node[midway, below right] {$e$};
\node[red] at ({-1.6+\xshift}, -.4) {$v'$};
\node at ({-1.2+\xshift}, 0.25) {$\G[{[v]}]''$};

\end{tikzpicture}
    \caption{Example $\G[{[v]}]'$ and $\G[{[v]}]''$ as described in Section \ref{sec: shadow-sage}}
    \label{fig: sage failure example}
\end{figure}


\begin{proof}[Proof of Theorem \ref{thm: shadowsage tau}]
{\shadowsage} follows the GraphSAGE way of normalization on the adjacency matrix. Thus, each row of $\Arw_{[v]}$ sums up to 1. 
Such normalization enables us to view $\Arw_{[v]}$ as the transition matrix of a Markov chain. The coefficients of the linear function $\tau$ (\ie, $\delta_{[v]}\paren{\cdot}$) equal the limiting distribution of such a Markov chain. 
Therefore, we can use the convergence theorem of Markov chain to characterize the convergence of $\Arw_{[v]}^{L'}$ towards $\tau$'s coefficients. 
We can also use the mixing time of the Markov chain to derive the error bound of shaDow-SAGE. 
Our proof is built on the existing theoretical results in \cite{mc_mixing}.

We rewrite $\tau$ as $\tau=C\cdot \bm{\delta}\bm{X}$, where $\bm{\delta}$ is a length-$\size{\V}$ vector with $\size{\V[{[v]}]}$ non-zero elements -- each non-zero corresponds to $\delta_{[v]}\paren{u}$ of the neighborhood $\G[{[v]}]$. 
Further, denote $\deltarw$ as the normalized $\bm{\delta}$ vector. \ie, each non-zero element of $\deltarw$ equals $\frac{\delta_{[v]}\paren{u}}{\sum_{w\in\V[{[v]}]}\delta_{[v]}\paren{w}}$. 
So $\tau=C'\cdot\deltarw\X$ with some scaling factor $C'$.
For ease of discussion, we ignore $C'$, as any scaling factor can be easily expressed by the model weight parameters. 

By Section \ref{sec: shadow-sage}, {\shadowsage} can express $\left[\Arw_{[v]}^{L'}\X\right]_{v,:}=\left[\Arw_{[v]}^{L'}\right]_{v,:}\X$. 
So now we need to show how $\left[\Arw_{[v]}^{L'}\right]_{v,:}$ converges to $\deltarw$ when $L'\rightarrow\infty$. 
We establish the following correspondence:

\parag{$\Arw_{[v]}$ as the Markov transition matrix.}
This can be easily seen since each row of $\Arw_{[v]}$ sums up to 1. 
Further, if we add self-loop in $\V[{[v]}]$ and we consider $\G[{[v]}]$ as undirected and connected (as can be guaranteed by {\sample}), then $\Arw_{[v]}$ is \emph{irreducible}, \emph{aperiodic} and \emph{reversible}. 
Irreducibility is guaranteed by $\G[{[v]}]$ being connected. Aperiodicity is guaranteed by self-loops. 
For reversibility, we can prove it by the stationary distribution of $\Arw_{[v]}$. 
As shown by the next paragraph, $\deltarw$ is the stationary distribution of $\Arw_{[v]}$. So we need to show that 

\begin{equation}
\label{eq: markov reversible}
\left[\deltarw\right]_{u}\left[\Arw_{[v]}\right]_{u,w}=\left[\deltarw\right]_{w}\left[\Arw_{[v]}\right]_{w,u}
\end{equation}

Consider two cases. 
If $\paren{u,w}\not\in\E[{[v]}]$, then $(w,u)\not\in\E[{[v]}]$ and both sides of Equation \ref{eq: markov reversible} equal 0. 
If $\paren{u,w}\in\E[{[v]}]$, then $\left[\deltarw\right]_u = \frac{\delta_{[v]}\paren{u}}{\sum_{x\in\V[{[v]}]}\delta_{[v]}\paren{x}}$ and $\left[\Arw_{[v]}\right]_{u,w}=\frac{1}{\delta_{[v]}\paren{u}}$. 
So both sides of Equation \ref{eq: markov reversible} equal $\frac{1}{\sum_{x\in\V[{[v]}]}\delta_{[v]}\paren{x}}$. 
Thus, Equation \ref{eq: markov reversible} holds and $\Arw_{[v]}$ is reversible. 

\parag{$\deltarw$ as the limiting distribution.}
By definition, the stationary distribution $\bm{\pi}$ satisfies 

\begin{equation}
\label{eq: markov stationary}
    \bm{\pi} = \bm{\pi}\Arw_{[v]}
\end{equation}

It is easy to see that setting $\bm{\pi}=\deltarw^\trans$ can satisfy Equation \ref{eq: markov stationary}. So $\deltarw$ is the stationary distribution. 
For irreducible and aperiodic Markov chain, the stationary distribution is also the \emph{limiting distribution}, and thus

\begin{equation}
    \lim_{L'\rightarrow\infty}\left[\Arw_{[v]}^{L'}\right]_{v,:} = \deltarw
\end{equation}

So far, we can already show that when the {\shadowsage} model is deep enough (\ie, with large $L'$), the model output approximates $\tau$:

\begin{equation}
    \lim_{L'\rightarrow\infty} \left[\Arw_{[v]}^{L'}\X\right]_{v,:} = \tau
\end{equation}

\parag{Desired model depth as the mixing time.}
Next, we want to see if we want the {\shadowsage} model to reach a given error bound, how many layers $L'$ are required. 
Firstly, directly applying Theorem 4.9 of \cite{mc_mixing}, we know that the error of {\shadowsage} approximating $\tau$ reduces \emph{exponentially} with the model depth $L'$. 
Then From Equation 4.36 of \cite{mc_mixing}, we can directly relate the mixing time of Markov chain with the required shaDow-SAGE depth to reach any $\epsilon$ error. 
Finally, by Theorem 12.3 of \cite{mc_mixing}, the mixing time can be bounded by the ``absolute spectral gap'' of the transition matrix $\Arw_{[v]}$. 
Note that Theorem 12.3 applies when the Markov chain is reversible -- a condition satisfied by $\Arw_{[v]}$ as we have already discussed. 
The absolute spectral gap is calculated from the eigenvalues of the transition matrix $\Arw_{[v]}$. 

In summary, {\shadowsage} can approximate $\tau$ with error decaying exponentially with depth $L'$. 
\end{proof}

\subsection{Proof on {\shadowgin} Expressivity}
\label{appendix: 1wl proof}

\begin{proof}[Proof of Theorem \ref{thm: shallow-shallow}]
We consider GNNs whose layer function is defined by Equation \ref{eq: thm shallow-shallow}. 
Define $\G[{[v]}]^L$ as the subgraph \emph{induced} from all of $v$'s $\ell$-hop neighbors in $\G$, where $1\leq \ell\leq L$.

First, we show that a {\shadow} following Equation \ref{eq: thm shallow-shallow} is at least as discriminative as the 1-dimensional Weisfeiler-Lehman test (1-WL). 
We first prove that given any $\G[{[v]}]^L$, an $L'$-layer GNN can have exactly the same output as an $L$-layer GNN, where $L'> L$. 
We note that for the target node $v$, the only difference between these two architectures is that an $L$-layer GNN exactly performs $L$ message passing iterations to propagate node information from at most $L$ hops away, while an $L'$-layer GNN has $L'-L$ more message passing iterations before performing the rest of $L$ message passings. 
Thanks to the universal approximation theorem~\cite{hornik1989multilayer}, we can let the MLPs implementing $f_1$ and $f_2$ learn the following GNN layer function: 

\begin{align*}
\vh_v^{(\ell)}=&f_1^{(\ell)}\paren{\vh_v^{(\ell-1)}, \sum_{u \in\mathcal{N}_v} f_2^{(\ell)}\paren{\vh_v^{(\ell-1)}, \vh_u^{(\ell-1)}}} \\
    =& \vh_v^{(\ell-1)}, \qquad\qquad\forall 1 \leq \ell \leq L'-L 
\end{align*}

This means $\vh_v^{(0)} = \vh_v^{(L'-L)}$. Then the $L'$-layer GNN will have the same output as the $L$-layer GNN. 
According to \cite{gin}, the normal GNN (\ie, $L$-layer on $\G[{[v]}]^L$) following Equation \ref{eq: thm shallow-shallow} is as discriminative as 1-WL. 
Thus, {\shadow} (\ie, $L'$-layer on $\G[{[v]}]^L$) following Equation \ref{eq: thm shallow-shallow} is at least as discriminative as 1-WL.

\begin{figure}
    \centering
    \begin{minipage}{0.55\textwidth}
        \centering
        \definecolor{colCC1}{HTML}{0F6FCA}
\definecolor{colCC2}{RGB}{255,23,68}

\tikzset{
    archnode/.style={thick,ellipse,minimum height=2cm,minimum width=3cm,c4,draw=c4,fill=c0},
}
\begin{tikzpicture}[
  outer/.style={draw=gray,dashed,fill=green!1,thick},
  >={Stealth[inset=0pt,
  length=4pt,angle'=45,round]},scale=1.,
  sty edge sel/.style={ultra thick,color=c4},
  sty node sel/.style={ultra thick,circle,c4,draw=c4,fill=c0,minimum size=\sizenode},
  sty node unsel/.style={circle,c2,draw=c2,fill=c0,minimum size=\sizenode},
  sty edge unsel/.style={color=c2},
]

\def\R{0.75};
\def\offXa{-1.};
\def\offYa{0};
\draw ({\R*cos(0)+\offXa}, {\R*sin(0)+\offYa}) \foreach \theta in {45,90,...,360} {  -- ({\R*cos(\theta)+\offXa}, {\R*sin(\theta)+\offYa}) };
\foreach \theta in {45,90,...,360} {
    \ifthenelse{\NOT \theta = 45 \AND \NOT \theta = 135}{
      \node[inner sep=1.3pt,circle,draw=gray,fill=gray] (n\theta) at ({\R*cos(\theta)+\offXa}, {\R*sin(\theta)+\offYa}) {};
    }{
      \ifthenelse{\theta = 45} {
        \node[inner sep=1.3pt,circle,draw=colCC1,fill=colCC1] (n\theta) at ({\R*cos(\theta)+\offXa}, {\R*sin(\theta)+\offYa}) {};
      }{
        \node[inner sep=1.3pt,circle,draw=colCC2,fill=colCC2] (n\theta) at ({\R*cos(\theta)+\offXa}, {\R*sin(\theta)+\offYa}) {};
      }
    }
}

\draw (n45) -- (n225);
\draw (n90) -- (n360);
\draw (n315) -- (n180);
\draw (n270) -- (n135);

\node[color=colCC1] at ({\R*cos(45)+\offXa+0.25}, {\R*sin(45)+\offYa}) {$u$};
\node[color=colCC2] at ({\R*cos(135)+\offXa-0.25}, {\R*sin(135)+\offYa}) {$v$};

\node at ({\R*cos(270)+\offXa+0.75}, {\R*sin(270)+\offYa}) {$\G$};

\def\R{0.75};
\def\offXu{3.5};
\def\offYu{0};
\foreach \theta in {45,90,360,225} {
    \ifthenelse{\NOT \theta = 45}{
      \node[inner sep=1.3pt,circle,draw=gray,fill=gray] (u\theta) at ({\R*cos(\theta)+\offXu}, {\R*sin(\theta)+\offYu}) {};
    }{
      \node[inner sep=1.3pt,circle,draw=colCC1,fill=colCC1] (u\theta) at ({\R*cos(\theta)+\offXu}, {\R*sin(\theta)+\offYu}) {};
    }
}
\node[color=colCC1] at ({\R*cos(45)+\offXu+0.25}, {\R*sin(45)+\offYu}) {$u$};
\draw (u45) -- (u360);
\draw (u45) -- (u90);
\draw (u360) -- (u90);
\draw (u45) -- (u225);

\node at ({\R*cos(270)+\offXu+0.75}, {\R*sin(270)+\offYu}) {$\G[{[u]}]^1$};

\def\R{0.75};
\def\offXv{1.5};
\def\offYv{0};
\foreach \theta in {90,135,180,270} {
    \ifthenelse{\NOT \theta = 135}{
      \node[inner sep=1.3pt,circle,draw=gray,fill=gray] (v\theta) at ({\R*cos(\theta)+\offXv}, {\R*sin(\theta)+\offYv}) {};
    }{
      \node[inner sep=1.3pt,circle,draw=colCC2,fill=colCC2] (v\theta) at ({\R*cos(\theta)+\offXv}, {\R*sin(\theta)+\offYv}) {};
    }
}
\node at ({\R*cos(270)+\offXv+0.75}, {\R*sin(270)+\offYv}) {$\G[{[v]}]^1$};
\node[color=colCC2] at ({\R*cos(135)+\offXv-0.25}, {\R*sin(135)+\offYv}) {$v$};
\draw (v135) -- (v90);
\draw (v135) -- (v180);
\draw (v135) -- (v270);

\end{tikzpicture}
        \captionof{figure}{Example 3-regular graph and the 1-hop subgraph of the target nodes}
        \label{fig: 1wl-shadow example 2}
    \end{minipage}
    \hfill
    \begin{minipage}{0.35\textwidth}
        \centering
        \definecolor{colCC1}{HTML}{0F6FCA}
\definecolor{colCC2}{RGB}{255,23,68}

\tikzset{
    archnode/.style={thick,ellipse,minimum height=2cm,minimum width=3cm,c4,draw=c4,fill=c0},
}
\begin{tikzpicture}[
     outer/.style={draw=gray,dashed,fill=green!1,thick},
     >={Stealth[inset=0pt,
     length=4pt,angle'=45,round]},scale=1.,
     sty edge sel/.style={ultra thick,color=c4},
     sty node sel/.style={ultra thick,circle,c4,draw=c4,fill=c0,minimum size=\sizenode},
     sty node unsel/.style={circle,c2,draw=c2,fill=c0,minimum size=\sizenode},
     sty edge unsel/.style={color=c2},
]

\def\R{0.75};
\def\offXa{-0.6};
\def\offYa{0};
\draw ({\R*cos(0)+\offXa}, {\R*sin(0)+\offYa}) \foreach \theta in {60,120,...,360} {  -- ({\R*cos(\theta)+\offXa}, {\R*sin(\theta)+\offYa}) };
\foreach \theta in {60,120,...,360} {
     \node[inner sep=1.3pt,circle,draw=colCC1,fill=colCC1] at ({\R*cos(\theta)+\offXa}, {\R*sin(\theta)+\offYa}) {};
}
\def\offXb{1};
\def\offYb{-0.2};
\draw ({\R*cos(90)+\offXb}, {\R*sin(90)+\offYb}) \foreach \theta in {210,330,90} {  -- ({\R*cos(\theta)+\offXb}, {\R*sin(\theta)+\offYb}) };
\foreach \theta in {90,210,330} {
     \node[inner sep=1.3pt,circle,draw=colCC2,fill=colCC2] at ({\R*cos(\theta)+\offXb}, {\R*sin(\theta)+\offYb}) {};
}

\node[color=colCC1] at ({\R*cos(60)+\offXa+0.25}, {\R*sin(60)+\offYa}) {$u$};
\node[color=colCC2] at ({\R*cos(90)+\offXb+0.25}, {\R*sin(90)+\offYb}) {$v$};

\node[color=colCC2] at (\offXb, \offYb) {CC2};
\node[color=colCC1] at (\offXa, \offYa) {CC1};

\end{tikzpicture}
        \captionof{figure}{Example 2-regular graph with two connected components (CC)}
        \label{fig: 1wl-shadow example}
    \end{minipage}
\end{figure}

Next, we show that there exists a graph where {\shadow} can discriminate topologically different nodes, while 1-WL cannot. 
The example graph mentioned in Section \ref{sec: shadow-gin} is one such graph. We duplicate the graph here in Figure \ref{fig: 1wl-shadow example 2}. 
$\G$ is connected and is 3-regular. 
The nodes $u$ and $v$ marked in red and blue have different topological roles, and thus an ideal model / algorithm should assign different labels to them. 
Suppose all nodes in $\G$ have identical features. 
For 1-WL, it will assign the same label to all nodes (including $u$ and $v$) no matter how many iterations it runs. 
For {\shadow}, if we let {\sample} return $\G[{[v]}]^1$ and $\G[{[u]}]^1$ (\ie, $1$-hop based {\sample}), and set $L'>1$, then our model can assign different labels to $u$ and $v$. 
To see why, note that $\G[{[u]}]^1$ and $\G[{[v]}]^1$ are non-isomorphic, and more importantly, \emph{non-regular}. 
So if we run the ``\textsc{shaDow}'' version of 1-WL (\ie, 1-WL on $\G[{[v]}]^1$ and $\G[{[u]}]^1$ rather than on $\G$), the nodes $u$ and $v$ will be assigned to different labels after two iterations. 
Equivalently, {\shadow} can discriminate $u$ and $v$ with at least 2 layers. 

We can further generalize to construct more such example graphs (although such generalization is not required by the proof). 
The guideline we follow is that, the full graph $\G$ should be regular. Yet the subgraphs around topologically different nodes (\eg, $\G[{[v]}]^k$) should be non-isomorphic and non-regular. 

The graph in Figure \ref{fig: 1wl-shadow example} is another example 2-regular graph, where nodes $u$ and $v$ can only be differentiated by decoupling the GIN. 

Finally, combining the above two parts, {\shadow} following Equation \ref{eq: thm shallow-shallow} is more discriminative than the 1-dimensional Weisfeiler-Lehman test. 

\end{proof}

\section{Inference Complexity Calculation}
\label{appendix: inf cost calculation}
Here we describe the equations to compute the ``inference cost'' of Table \ref{tab: exp-sota}. 
Recall that inference cost is a measure of computation complexity to generate node embeddings for a given GNN architecture. 

The numbers in Table \ref{tab: exp-sota} shows on average, how many arithmetic operations is required to generate the embedding for each node. For a GNN layer $\ell$, denote the number of input nodes as $n^{(\ell-1)}$ and the number of output nodes as $n^{(\ell)}$. Denote the number of edges connecting the input and output nodes as $m^{(\ell)}$. 
Recall that we use $d^{(\ell)}$ to denote the number of channels, or, the hidden dimension. 

In the following, we ignore the computation cost of non-linear activation, batch-normalization and applying bias, since their complexity is negligible compared with the other operations. 

For the \emph{GCN} architecture, each layer mainly performs two operations: aggregation of the neighbor features and linear transformation by the layer weights. So the number of multiplication-addition (MAC) operations of a layer-$\ell$ equals:

\begin{align}
\label{eq: complexity gcn}
    P_{\text{GCN}}^{(\ell)} = m^{(\ell)} d^{(\ell-1)} + n^{(\ell)} d^{(\ell-1)} d^{(\ell)}
\end{align}

Similarly, for the \emph{GraphSAGE} architecture, the number of MAC operations equals:

\begin{align}
\label{eq: complexity sage}
    P_{\text{SAGE}}^{(\ell)} = m^{(\ell)} d^{(\ell-1)} + 2\cdot n^{(\ell)} d^{(\ell-1)} d^{(\ell)}
\end{align}

where the $2$ factor is due to the weight transformation on the self-features. 

For \emph{GAT}, suppose the number of attention heads is $t$. Then the layer contains $t$ weight matrices $\bm{W}^i$, each of shape $d^{(\ell-1)}\times \frac{d^{(\ell)}}{t}$. 
We first transform each of the $n^{(\ell-1)}$ nodes by $\bm{W}^i$. Then for each edge $(u,v)$ connecting the layer input $u$ to the layer output $v$, we obtain its edge weight (\textit{i.e.}, a scalar) by computing the dot product between $u$'s, $v$'s transformed feature vectors and the model attention weight vector. 
After obtaining the edge weight, the remaining computation is to aggregate the $n^{(\ell-1)}$ features into the $n^{(\ell)}$ nodes. 
The final output is obtained by concatenating the features of different heads. 
The number of MAC operations equals:

\begin{align}
\label{eq: complexity gat}
    P_{\text{GAT}}^{(\ell)} =& t\cdot n^{(\ell-1)}d^{(\ell-1)}\frac{d^{(\ell)}}{t} + 2t\cdot m^{(\ell)}\frac{d^{(\ell)}}{t} + m^{(\ell)}d^{(\ell)} \nonumber\\
    =&3m^{(\ell)}d^{(\ell)} + n^{(\ell-1)}d^{(\ell-1)}d^{(\ell)}
\end{align}

On the same graph, GCN is less expensive than GraphSAGE. GraphSAGE is in general less expensive than GAT, due to $n^{(\ell-1)}>n^{(\ell)}$ (on the other hand, note that $n^{(\ell-1)} = n^{(\ell)}$ for any {\shadow}). 
In addition, for all architectures, $P_{*}^{(\ell)}$ grows proportionally with $n^{(\ell)}$ and $m^{(\ell)}$. For the normal GNN architecture, since we are using the full $\ell$-hop neighborhood for each node, the value of $n^{(\ell)}$ and $m^{(\ell)}$ may grow exponentially with $\ell$. This is the ``neighbor explosion'' phenonemon and is the root cause of the high inference cost of the baseline in Table \ref{tab: exp-sota}. 

For {\shadow}, suppose the subgraph contains $n$ nodes and $m$ edges. Then $n^{(\ell)}=n$ and $m^{(\ell)}=m$. The inference cost of any {\shadow} is ensured to only grow linearly with the depth of the GNN. 

\parag{Remark}
The above calculation on normal GNNs is under the realistic setting of minibatch inference. 
By ``realistic setting'', we mean 
\begin{enumerate*}
\item the graph size is often gigantic (of similar scale as {\papers});
\item the total number of target nodes can be much, much smaller than the full graph size (\eg, $<1\%$ of all nodes are target nodes, as in {\papers}), and
\item we may only want to generate embedding for a small subset of the target nodes at a time. 
\end{enumerate*}
As a result, the above calculated computation complexity reflects what we can achieve in practice, under minibatch computation. 

On the other hand, most of the benchmark graphs evaluated in the paper may not reflect the realistic setting. For example, {\arxiv} is downscaled $657\times$ from {\papers}. Consequently, the full graph size is small and the number of target nodes is comparable to the full graph size.  
Such a ``benchmark setting'' enables full-batch inference: \eg, for GCN, one multiplication on the full graph adjacency matrix (\emph{i.e.}, $\Asym\cdot \bm{H}\cdot \bm{W}$) generates the next layer hidden features for all nodes at once. 

While full-batch computation leads to significantly lower computation complexity than Equations \ref{eq: complexity gcn}, \ref{eq: complexity sage} and \ref{eq: complexity gat}, it has strict constraints on the GPU memory size and the graph size -- the full adjacency matrix and the various feature data of \emph{all} nodes need to completely fit into the GPU memory. For example, according to the official OGB guideline \citep{ogb}, full-batch training of {\products} on a 3-layer GNN requires 48GB of GPU memory. On one hand, only the most powerful NVIDIA GPUs (\emph{e.g.}, RTX A6000) have memory as large as 48GB. On the other hand, {\products} is still $45\times$ smaller than {\papers}.
Therefore, full batch computation is not feasible in practice. 

In summary, the inference complexity shown in Table \ref{tab: exp-sota} based on Equations \ref{eq: complexity gcn}, \ref{eq: complexity sage} and \ref{eq: complexity gat} reflects the feasibility of deploying the GNN models in \emph{real-life applications}. 

\section{Designing Subgraph Extraction Algorithms}
\label{appendix: sampler detail}

We illustrate how we can follow the three main approaches (\ie, heuristic based, model based and learning based. See Section \ref{sec: sampler}) to design various subgraph extraction functions. Note that the $L$-hop and PPR based {\sample} are only example algorithms to implement the principle of ``decoupling the depth and scope''. 

\parag{Heuristic based.}
As discussed in Section \ref{sec: sampler}, we can design {\sample} by selecting appropriate graph metrics. 
We have shown some examples in Section \ref{sec: sampler}. 
Here we provide detailed description on the two {\sample} used in our experiments. 

\textsc{$L$-hop {\sample}.}\hspace{.3cm}
 Starting from the target node $v$, the algorithm traverses up to $L$ hops. At a hop-$\ell$ node $u$, the sampler
will add to its hop-($\ell+1$) node set either all neighbors of $u$, or $b$ randomly selected neighbors of $u$. The subgraph $\G[{[v]}]$ is induced from all the nodes selected by the {\sample}. 
Here depth $L$ and budget $b$ are the hyperparameters. 

\textsc{PPR {\sample}.}\hspace{.3cm}
Given a target $v$, our PPR {\sample} proceeds as follows: 
\begin{enumerate*}
\item Compute the approximate
PPR vector $\bm{\pi}_v\in\mathbb{R}^{\size{\V}}$ for $v$. 
\item Select the neighborhood $\V[{[v]}]$ such that for $u\in\V[{[v]}]$, the PPR score $\left[\bm{\pi}_v\right]_u$ is large. 
\item Construct the induced subgraph from $\V[{[v]}]$. 
\end{enumerate*}
For step 1, even though vector $\bm{\pi}_v$ is of length-$\size{\V}$, most of the PPR values are close to zero. 
So we can use a fast algorithm \cite{ppr_focs} to compute the approximate PPR score by traversing only the local region around $v$. 
Throughout the $\bm{\pi}_v$ computation, most of its entries remain as the initial value of 0. Therefore, the PPR {\sample}
is scalable and efficient w.r.t. both time and space complexity. 
For step 2, we can either select $\V[{[v]}]$ based on top-$p$ values in $\bm{\pi}_v$, or based on some threshold $\left[\bm{\pi}_v\right]>\theta$.
Then $p$, $\theta$ are hyperparameters. 
For step 3, notice that our PPR sampler only uses PPR scores as a node filtering condition. The original graph structure is still preserved among $\V[{[v]}]$,
due to the induced subgraph step. 
In comparison, other related works \cite{gdc, pprgo} do not preserve the original graph structure. 

We also empirically highlight that the approximate PPR is both scalable and efficient. The number of nodes we need to visit to obtain the approximate $\bm{\pi}_v$ is much smaller than the graph size $\size{\V}$. 
In addition, each visit only involves a scalar update, which is orders of magnitude cheaper than the cost of neighbor propagation in a GNN. We profile the three datasets, {\reddit}, {\yelp} and {\products}. 
As shown in Table \ref{tab: ppr touched neighbors}, for each target node on average, the number of nodes touched by the PPR computation is comparable to the full 2-hop neighborhood size. This also indicates that faraway neighbors do not have much topological significance. 
We further show the empirical execution time measurement on multi-core CPU machines in Figure \ref{fig: ppr time}. 

\begin{table}[!ht]
\caption{Average number of nodes touched by the approximate PPR computation}
    \centering
    \begin{tabular}{lcc}
        \toprule
        Dataset & Avg. 2-hop size & Avg. \# nodes touched by PPR\\
        \midrule\midrule
        {\reddit} & 11093 & 27751 \\
        {\yelp} & 2472 & 5575\\
        {\products} & 3961 & 5405\\
        \bottomrule
    \end{tabular}
    \label{tab: ppr touched neighbors}
\end{table}

\parag{Model based.}
From Section \ref{sec: intro}, treating $\G$ as the union of $\G[{[v]}]$'s describes a generation process on $\G$. 
{\sample} then describes the \emph{reverse} of such a process. 
This links to the graph generation literature \cite{graph_evo}. 
\eg, forest-fire generation model would correspond to {\sample} being forest-fire sampler \cite{ff_sampling}. 
More specifically, to extract the subgraph around a node $v$, we can imagine a process of adding $v$ into the partial graph $\G'$ consisting of $\V\setminus\{v\}$. 
Then the nodes selected by {\sample} would correspond to the nodes touched by such an imaginary process of adding $v$.

\parag{Learning based.}
We may treat the design of {\sample} as part of GNN training. 
However, due to the combinatorial nature of subgraph extraction, simultaneously learning {\sample} and the {\shadow} layer parameters is challenging. 
The learning of {\sample} can be made possible with appropriate approximation and relaxations. \eg, we may use the techniques proposed in \cite{gnnexplainer} to design a two-phased learning process. 
In Phase 1, we train a normal $L$-layer GNN, and then use \cite{gnnexplainer} to identify the important neighbors among the full $L$-hop neighborhood. 
In Phase 2, we use the neighborhood identified in Phase 1 as the subgraph returned by {\sample}. Then we train an $L'$-layer {\shadow} on top of such neighborhood. 

Detailed design and evaluation on the model and learning based approaches are left as future work.

\section{The Complete {\shadow} Framework}
\label{appendix: pool defn}

\begin{algorithm}[tb]
    \caption{Inference algorithm for the general {\shadow} model}
    \label{algo: shadow complete}
    \begin{algorithmic}
    \State {\bfseries Input:} $\G\paren{\V, \E, \X}$; Target nodes $\V[t]$; GNN model; $C$ number of samplers $\{{\sample}_i\}$;
    \State {\bfseries Output:} Node embedding matrix $\bm{Y}_t$ for $\V[t]$;
      \For{$v\in\V[t]$}
        \For{$i=1$ to $C$}
            \State Get $\G[{[v], i}]$ by $\sample_i$ on $\G$
            \State Propagate $\G[{[v], i}]$ in the $i$\textsuperscript{th} branch of $L$-layer GNN
            \State $\bm{H}_{[v], i} \gets f_i^L\paren{\X_{[v], i}, \mA_{[v],i}}$
            \State $\bm{y}_{v,i}\gets \func[MLP]{\func[READOUT]{\bm{H}_{[v], i}}\big\| \left[\bm{H}_{[v], i}\right]_{v,:}}$
        \EndFor
        \State $\bm{y}_v\gets\func[ENSEMBLE]{\{\bm{y}_{v, i}\}}$
      \EndFor
    \end{algorithmic}
\end{algorithm}

Algorithm \ref{algo: shadow complete} shows the inference algorithm of {\shadow} after integrating the various architecture extensions discussed in Section \ref{sec: ensemble}. 
The $f_i$ function specifies the layer propagation of a given GNN architecture (\eg, GCN, GraphSAGE, \emph{etc.}), and $f_i^L$ is a shorthand for $L$ times iterative layer propagation of the $L$-layer model.  
The $\func[READOUT]{\cdot}$ function performs subgraph pooling and $\func[ENSEMBLE]{\cdot}$ performs subgraph ensemble. 
The implementation of such functions are described in the following subsections. 

After the pooling by $\func[READOUT]{\cdot}$, we further feed into an MLP the vector summarized from the subgraph and the vector for the target. 
This way, even if two target nodes $u$ and $v$ have the same neighborhood, we can still differentiate their embeddings based on the vectors $\left[\bm{H}_{[v]}\right]_{v,:}$ and $\left[\bm{H}_{[u]}\right]_{u,:}$.

\subsection{Architecture for Subgraph Pooling}

The $\func[READOUT]{\cdot}$ function in Algorithm \ref{algo: shadow complete} can perform subgraph-level operation such as sum pooling, max pooling, mean pooling, sort pooling \citep{sortpool}, \emph{etc.}
Table \ref{tab: pool ops} summarizes all the pooling operations that we have integrated into the {\shadow} framework, where $\bm{H}_{[v]}$ is the subgraph embedding matrix as shown in Algorithm \ref{algo: shadow complete}.

\begin{table}[!ht]
    \caption{$\func[READOUT]{\cdot}$ function for different pooling operations.}
        \centering
        \begin{tabular}{lc}
            \toprule
            Name & $\func[READOUT]{\cdot}$\\
            \midrule\midrule
            Center & $\left[\bm{H}_{[v]}\right]_{v,:}$\\
            Sum & $\sum_{u\in\V[{[v]}]}\left[\bm{H}_{[v]}\right]_{u, :}$\\
            Mean & $\frac{1}{\size{\V[{[v]}]}}\sum_{u\in\V[{[v]}]}\left[\bm{H}_{[v]}\right]_{u, :}$\\
            Max & $\max_{u\in\V[{[v]}]} \left[\bm{H}_{[v]}\right]_{u, :}$\\
            Sort & $\func[MLP]{\left[\bm{H}_{[v]}\right]_{\left[\arg\text{sort} \left[\bm{H}_{[v]}\right]_{:,-1}\right]_{:s}}}$\\
            \bottomrule
        \end{tabular}
        \label{tab: pool ops}
    \end{table}

In particular, sort pooling 
\begin{enumerate*}
\item sorts the last column of the feature matrix,
\item takes the indices of the top $s$ values after sorting ($s$ is a hyperparameter of the sort pooling function), and
\item slice a submatrix of $\bm{H}_{[v]}$ based on the top-$s$ indices.
\end{enumerate*}
The input to the $\func[MLP]{\cdot}$ (last row of Table \ref{tab: pool ops}) is a submatrix of $\bm{H}_{[v]}$ consisting of $s$ rows, and the output of the $\func[MLP]{\cdot}$ is a single vector.

\subsection{Architecture for Subgraph Ensemble}
\label{appendix: arch ensemble}

For the ensemble function, we implement the following after applying attention on the outputs of the different model branches:

\begin{align}
    w_i =& \func[MLP]{\bm{y}_{v,i}} \cdot \bm{q}\\
    \bm{y}_v =& \sum_{i=1}^{C}\widetilde{w}_i\cdot \bm{y}_{v,i}
\end{align}

where $\bm{q}$ is a learnable vector; $\widetilde{\bm{w}}$ is normalized from $\bm{w}$ by softmax; $\bm{y}_v$ is the final embedding.

\section{Detailed Experimental Setup}

We used the tool ``Weight \& Biases'' \cite{wandb} for experiment tracking and visualization to develop insights for this paper. 

\subsection{Additional Dataset Details}
\label{appendix: dataset}
The statistics for the seven benchmark graphs is listed in Table \ref{tab: datasets}. Note that for {\yelp}, each node may have multiple labels, and thus we follow the original paper \citep{graphsaint} to report its F1-micro score. For all the other node classification graphs, a node is only associated with a single label, and so we report accuracy. 
Note that for {\reddit} and {\flickr}, other papers \citep{graphsage, graphsaint} also report F1-micro score. However, since each node only has one label, ``F1-micro score'' is exactly the same as ``accuracy''. 
For the link prediction graph {\collab}, we use ``Hits@50'' as the metric. 

For {\papers}, only around $1\%$ of the nodes are associated with ground truth labels. The training / validation / test sets are split only among those labelled nodes. 

\begin{table*}[!ht]
\caption{Dataset statistics}
    \centering
    \resizebox{\textwidth}{!}{
    \begin{tabular}{lccccccc}
        \toprule
        Dataset & Setting & Nodes & Edges & Degree & Feature & Classes & Train / Val / Test\\
        \midrule
        \midrule
        {\flickr} & Inductive & 89,250 & 899,756 & 10 & 500 & 7 & 0.50 / 0.25 / 0.25\\
        {\reddit} & Inductive & 232,965 & 11,606,919 & 50 & 602 & 41 & 0.66 / 0.10 / 0.24\\
        {\yelp} & Inductive & 716,847 & 6,977,410 & 10 & 300 & 100 & 0.75 / 0.10 / 0.15 \\
        {\arxiv} & Transductive & 169,343 & 1,166,243 & 7 & 128 & 40 & 0.54 / 0.18 / 0.29 \\
        {\products} & Transductive & 2,449,029 & 61,859,140 & 25 & 100 & 47 & 0.10 / 0.02 / 0.88\\
        {\papers} & Transductive & 111,059,956 & 1,615,685,872 & 29 & 128 & 172 & 0.78 / 0.08 / 0.14\\
        \midrule
        {\collab} & -- & 235,868 & 1,285,465 & 8 & 128 & -- & 0.92/0.04/0.04\\
        \bottomrule
    \end{tabular}}
    \label{tab: datasets}
\end{table*}

Note that {\reddit} is a pre-exisiting dataset collected by Stanford, available at \url{http://snap.stanford.edu/graphsage}. Facebook did not directly collect any data from Reddit. None Reddit content is reproduced in this paper.

\subsection{Hardware}
\label{appendix: hw spec}

We have tested {\shadow} under various hardware configurations, ranging from low-end desktops to high-end servers. We observe that the training and inference of {\shadow} can be easily adapted to the amount of available hardware resources by adjusting the batch size. 

In summary, we have used the following three machines to compute {\shadow}. 

\begin{itemize}
\item \textsc{Machine 1}: This is a low-end desktop machine with 4-core Intel Core i7-6700 CPU @3.4GHz, 16GB RAM and one NVIDIA GeForce GTX 1650 GPU of 4GB memory.
\item \textsc{Machine 2}: This is a low-end server with 28-core Intel Xeon Gold 5120 CPU @2.2GHz, 128GB RAM and one NVIDIA Titan Xp GPU of 12GB memory.
\item \textsc{Machine 3}: This is another server with 64-core AMD Ryzen Threadripper 3990x CPU @2.9GHz, 256GB RAM and three NVIDIA GeForce RTX3090 GPUs of 24GB memory. 
\end{itemize}

From the GPU perspective, the low-end GTX 1650 GPU can support the {\shadow} computation on all the graphs (including {\papers}). However, the limited RAM size of \textsc{Machine 1} limits its usage to only {\flickr}, {\reddit}, {\arxiv} and {\collab}. The other two servers, \textsc{Machine 2} and \textsc{Machine 3}, are used to train all of the seven graphs.

Table \ref{tab: min hardware} summarizes our recommended minimum hardware resources to run {\shadow}. Note that regardless of the model, larger graphs \emph{inevitibly requires larger RAM} due to the growth of the raw features (the raw data files of {\papers} already takes 70GB). However, larger graphs \emph{do not correspond to higher GPU requirement} for {\shadow}, since the GPU memory consumption is controlled by the batch size parameter. 

See Appendix \ref{appendix: exp mem speed} for how to control the GPU memory-speed tradeoff by batch size.

\begin{table}[!ht]
    \caption{Recommended minimum hardware resources for {\shadow}}
    \centering
    \begin{tabular}{lcccc}
        \toprule
        Dataset & Num. nodes & CPU cores & RAM & GPU memory\\
        \midrule\midrule
        {\arxiv} & 0.2M & 4 & 8GB & 4GB \\
        {\products} & 2.4M & 4 & 32GB & 4GB \\
        {\papers} & 111.1M & 4 & 128GB & 4GB\\
        \bottomrule
    \end{tabular}
    \label{tab: min hardware}
\end{table}

\subsection{Software}

The code is written in \verb|Python 3.8.5| (where the sampling part is written with \verb|C++| parallelized by \verb|OpenMP|, and the interface between \verb|C++| and \verb|Python| is via \verb|PyBind11|). We use \verb|PyTorch 1.7.1| on \verb|CUDA 11.1| to train the model on GPU.

\subsection{Hyperparameter Tuning}
\label{appendix: hyperparameter}
For all models in Table \ref{tab: exp-sota}, we set the hidden dimension to be $d^{(\ell)}=256$. In addition, for GAT and \textsc{shaDow-GAT}, we set the number of attention heads to be $t=4$. 
For all the GIN and \textsc{shaDow-GIN} experiments, we use a 2-layer MLP (with hidden dimension $256$) to perform the injective mapping in each GIN layer. For all the JK-Net and \textsc{shaDow-JK} experiments, we use the concatenation operation to aggregate the hidden features of each layer in the JK layer. 
Additional comparison on GIN, JK-Net and their \textsc{shaDow} versions is shown in Table \ref{tab: shadow other}. 

All experiments are repeated five times without fixing random seeds. 

For all the baseline and {\shadow} experiments, we use Adam optimizer \citep{adam}. We perform grid search on the hyperparameter space defined by:

\begin{itemize}
    \item Activation: $\left\{\text{\fontfamily{lmtt}\selectfont ReLU}, \text{\fontfamily{lmtt}\selectfont ELU}, \text{\fontfamily{lmtt}\selectfont PRELU}\right\}$
    \item Dropout: 0 to 0.5 with stride of 0.05
    \item DropEdge: 0 to 0.5 with stride of 0.05
    \item Learning rate: $\{\num{1e-2}, \num{2e-3}, \num{1e-3}, \num{2e-4}, \num{1e-4}, \num{2e-5} \}$
    \item Batch size: $\{16, 32, 64, 128\}$
\end{itemize}

The {\sample} hyperparameters are tuned as follows. 

For the PPR {\sample}, we consider two versions: one based on fixed sampling budget $p$ and the other based on PPR score thresholding $\theta$. 
\begin{itemize}
\item If with fixed budget, then we disable the $\theta$ thresholding. We tune the budget by $p\in\left\{150, 175, 200 \right\}$.
\item If with thresholding, we set $\theta\in\left\{0.01, 0.05\right\}$. We still have an upper bound $p$ on the subgraph size. So if there are $q$ nodes in the neighborhood with PPR score larger than $\theta$, the final subgraph size would be $\max\{p, q\}$. Such an upper bound eliminates the corner cases which may cause hardware inefficiency due to very large $q$. In this case, we set the upper bound $p$ to be either 200 or 500.
\end{itemize}

For $L$-hop {\sample}, we define the hyperparameter space as:
\begin{itemize}
    \item Depth $L=2$
    \item Budget $b\in\left\{5, 10, 15, 20\right\}$
\end{itemize}

The hyperparameters to reproduce the Table \ref{tab: exp-sota} {\shadow} results are listed in Tables \ref{tab: config} and \ref{tab: config 2}.  
The hyperparameters to reproduce the Table \ref{tab: exp-papers} and \ref{tab: exp-collab} {\shadow} results are listed in Table \ref{tab: config 3}. 

\begin{table*}
\caption{Training configuration of {\shadow} for Table \ref{tab: exp-sota} (PPR {\sample})}
    \centering
    \resizebox{1.0\textwidth}{!}{
    \begin{tabular}{llcccccccc}
        \toprule
        Arch. & Dataset & Layers & Dim. & Pooling & Learning Rate & Batch Size & Dropout & DropEdge & Budget $p$\\
        \midrule
        \midrule
        \multirow{15}{*}{\shadowgcn} & \multirow{3}{*}{\flickr} & 
            3 & 256 & -- & 0.001 & 256 & 0.40 & 0.10 & 200\\
        & & 5 & 256 & -- & 0.001 & 256 & 0.40 & 0.10 & 200\\
        & & 5 & 256 & mean & 0.001 & 256 & 0.40 & 0.10 & 200\\
        \cmidrule(lr){2-10} & \multirow{3}{*}{\reddit} & 
            3 & 256 & -- & 0.0001 & 128 & 0.20 & 0.15 & 200\\
        & & 5 & 256 & -- & 0.0001 & 128 & 0.20 & 0.15 & 200\\
        & & 3 & 256 & max & 0.0001 & 128 & 0.20 & 0.15 & 200\\
        \cmidrule(lr){2-10} & \multirow{3}{*}{\yelp} & 
            3 & 256 & -- & 0.001 & 32 & 0.10 & 0.00 & 100\\
        & & 5 & 256 & -- & 0.001 & 32 & 0.10 & 0.00 & 100\\
        & & 5 & 256 & max & 0.001 & 32 & 0.10 & 0.00 & 100\\
        \cmidrule(lr){2-10} & \multirow{3}{*}{\arxiv} & 
            3 & 256 & -- & 0.00005 & 32 & 0.20 & 0.10 & 200\\
        & & 5 & 256 & -- & 0.00005 & 32 & 0.20 & 0.10 & 200\\
        & & 5 & 256 & max & 0.00002 & 16 & 0.25 & 0.10 & 200\\
        \cmidrule(lr){2-10} & \multirow{3}{*}{\products} & 
            3 & 256 & -- & 0.002 & 128 & 0.40 & 0.05 & 150\\
        & & 5 & 256 & -- & 0.002 & 128 & 0.40 & 0.05 & 150\\
        & & 5 & 256 & max & 0.002 & 128 & 0.40 & 0.05 & 150\\
        \cmidrule(lr){1-10}\multirow{15}{*}{\shadowsage} & \multirow{3}{*}{\flickr} & 
            3 & 256 & -- & 0.0005 & 64 & 0.45 & 0.05 & 200\\
        & & 5 & 256 & -- & 0.001 & 128 & 0.45 & 0.00 & 200\\
        & & 5 & 256 & max & 0.001 & 128 & 0.45 & 0.00 & 200\\
        \cmidrule(lr){2-10} & \multirow{3}{*}{\reddit} & 
            3 & 256 & -- & 0.0001 & 128 & 0.20 & 0.15 & 200\\
        & & 5 & 256 & -- & 0.0001 & 128 & 0.20 & 0.15 & 200\\
        & & 5 & 256 & max & 0.0001 & 128 & 0.20 & 0.15 & 200\\
        \cmidrule(lr){2-10} & \multirow{3}{*}{\yelp} & 
            3 & 256 & -- & 0.0005 & 16 & 0.10 & 0.00 & 100\\
        & & 5 & 256 & -- & 0.0005 & 16 & 0.10 & 0.00 & 100\\
        & & 3 & 256 & max & 0.0005 & 16 & 0.10 & 0.00 & 100\\
        \cmidrule(lr){2-10} & \multirow{3}{*}{\arxiv} & 
            3 & 256 & -- & 0.00002 & 16 & 0.25 & 0.15 & 200\\
        & & 5 & 256 & -- & 0.00002 & 16 & 0.25 & 0.15 & 200\\
        & & 5 & 256 & max & 0.00002 & 16 & 0.25 & 0.15 & 200\\
        \cmidrule(lr){2-10} & \multirow{3}{*}{\products} & 
            3 & 256 & -- & 0.002 & 128 & 0.40 & 0.05 & 150\\
        & & 5 & 256 & -- & 0.002 & 128 & 0.40 & 0.05 & 150\\
        & & 3 & 256 & max & 0.002 & 128 & 0.40 & 0.15 & 150\\
        \cmidrule(lr){1-10}\multirow{15}{*}{\shadowgat} & \multirow{3}{*}{\flickr} & 
            3 & 256 & -- & 0.0005 & 64 & 0.45 & 0.00 & 200\\
        & & 5 & 256 & -- & 0.0005 & 64 & 0.45 & 0.00 & 200\\
        & & 3 & 256 & mean & 0.001 & 128 & 0.40 & 0.00 & 200\\
        \cmidrule(lr){2-10} & \multirow{3}{*}{\reddit} & 
            3 & 256 & -- & 0.0001 & 128 & 0.20 & 0.00 & 200\\
        & & 5 & 256 & -- & 0.0001 & 128 & 0.20 & 0.00 & 200\\
        & & 5 & 256 & max & 0.0001 & 128 & 0.20 & 0.00 & 200\\
        \cmidrule(lr){2-10} & \multirow{3}{*}{\yelp} & 
            3 & 256 & -- & 0.0005 & 16 & 0.10 & 0.00 & 100\\
        & & 5 & 256 & -- & 0.0005 & 16 & 0.10 & 0.00 & 100\\
        & & 3 & 256 & max & 0.0005 & 16 & 0.10 & 0.00 & 100\\
        \cmidrule(lr){2-10} & \multirow{3}{*}{\arxiv} & 
            3 & 256 & -- & 0.0001 & 64 & 0.20 & 0.00 & 200 \\
        & & 5 & 256 & -- & 0.0001 & 64 & 0.20 & 0.00 & 200\\
        & & 5 & 256 & max & 0.0001 & 64 & 0.20 & 0.00 & 200\\
        \cmidrule(lr){2-10} & \multirow{3}{*}{\products} & 
            3 & 256 & -- & 0.001 & 128 & 0.35 & 0.10 & 150\\
        & & 5 & 256 & -- & 0.001 & 128 & 0.35 & 0.10 & 150\\
        & & 3 & 256 & max & 0.001 & 128 & 0.35 & 0.10 & 150\\
        \bottomrule
    \end{tabular}}
    \label{tab: config}
\end{table*}

\begin{table*}
\caption{Training configuration of {\shadow} for Table \ref{tab: exp-sota} ($2$-hop {\sample})}
    \centering
    \resizebox{1.0\textwidth}{!}{
    \begin{tabular}{llcccccccc}
        \toprule
        Arch. & Dataset & Layers & Dim. & Pooling & Learning Rate & Batch Size & Dropout & DropEdge & Budget $b$\\
        \midrule
        \midrule
        \multirow{10}{*}{\shadowsage} & \multirow{2}{*}{\flickr} & 
            3 & 256 & -- & 0.0005 & 64 & 0.45 & 0.05 & 20\\
        & & 5 & 256 & -- & 0.0005 & 64 & 0.45 & 0.05 & 20\\
        \cmidrule(lr){2-10} & \multirow{2}{*}{\reddit} & 
            3 & 256 & -- & 0.0001 & 128 & 0.20 & 0.15 & 20\\
        & & 5 & 256 & -- & 0.0001 & 128 & 0.20 & 0.15 & 20\\
        \cmidrule(lr){2-10} & \multirow{2}{*}{\yelp} & 
            3 & 256 & -- & 0.0005 & 16 & 0.10 & 0.00 & 20\\
        & & 5 & 256 & -- & 0.0005 & 16 & 0.10 & 0.00 & 20\\
        \cmidrule(lr){2-10} & \multirow{2}{*}{\arxiv} & 
            3 & 256 & -- & 0.00005 & 16 & 0.25 & 0.15 & 20\\
        & & 5 & 256 & -- & 0.00005 & 16 & 0.25 & 0.15 & 20\\
        \cmidrule(lr){2-10} & \multirow{2}{*}{\products} & 
            3 & 256 & -- & 0.002 & 128 & 0.40 & 0.05 & 20\\
        & & 5 & 256 & -- & 0.002 & 128 & 0.40 & 0.05 & 20\\
        \bottomrule
    \end{tabular}}
    \label{tab: config 2}
\end{table*}

\begin{table*}
    \caption{Training configuration of {\shadow} for Table \ref{tab: exp-papers} and \ref{tab: exp-collab} (PPR {\sample})}
        \centering
        \resizebox{1.0\textwidth}{!}{
            \begin{tabular}{llcccccccc}
                \toprule
                Dataset & Arch. & Layers & Dim. & Pooling & Learning Rate & Dropout & DropEdge & Budget $p$ & Threshold $\theta$\\
                \midrule
                \midrule
                \multirow{2}{*}{\papers} 
                & {\shadowgat\textsubscript{200}} & 5 & 800 & max & 0.0002 & 0.30 & 0.10 & 200 & 0.002\\
                & {\shadowgat}\textsubscript{400} & 3 & 800 & max & 0.0002 & 0.35 & 0.10 & 400 & 0.002\\
                \cmidrule(lr){1-10}\multirow{1}{*}{\collab} & {\shadowsage} & 5 & 256 & sort & 0.00002 & 0.25 & 0.10 & 200 & 0.002\\ 
                \bottomrule
            \end{tabular}}
        \label{tab: config 3}
    \end{table*}

\subsection{Setup of the SGC Experiments}
\label{appendix: deep setup}

Following \cite{sgc}, we compute the SGC model as $\bm{Y}=\func[softmax]{\widetilde{\bm{A}}^K\bm{X}\bm{W}}$ where $\widetilde{\bm{A}} = \widetilde{\bm{D}}^{-\frac{1}{2}}\widetilde{\bm{A}}\widetilde{\bm{D}}^{-\frac{1}{2}}$ and $\widetilde{\bm{A}} = \bm{I}+\bm{A}$. Matrix $\bm{W}$ is the only learnable parameter. $K$ is the power on the adjacency matrix and we vary it as $K\in\{1, 3, 5, 10, 20, 30, 40\}$ in the Figure \ref{fig: sgc deep} experiments. 
For \textsc{shaDow-SGC}, we compute the embedding for target $v$ as $\bm{y}_v=\left[\func[softmax]{\widetilde{\bm{A}}_{[v]}^K\bm{X}_{[v]}\bm{W}}\right]_{v,:}$. 

SGC and {\shadowsgc} are trained with the same hyperparameters (\textit{i.e.}, learning rate equals 0.001 and dropout equals 0.1, across all datasets). {\shadowsgc} uses the same {\sample} as the {\shadowgcn} model in Table \ref{tab: exp-sota}. In the legend of Figure \ref{fig: sgc deep}, due to lack of space, we use \textsc{s-SGC} to denote {\shadowsgc}. We use ``{\fontfamily{lmtt}\selectfont F}'', ``{\fontfamily{lmtt}\selectfont R}'' and ``{\fontfamily{lmtt}\selectfont A}'' to denote the datasets of {\flickr}, {\reddit} and {\arxiv} respectively.

\section{Additional Experimental Results}
\label{appendix: additional exp}

\subsection{Understanding the Low Memory Overhead of {\shadow}}
\label{appendix: exp mem speed}

{\shadow} in general consumes much less memory than the other GNN-based methods. 
We can understand the low memory overhead of {\shadow} from two perspectives. 

Comparing with the normal GNN model, {\shadow} requires much less GPU memory since both the training and inference of {\shadow} proceeds in minibatches. 
While other minibatching training algorithms exist for the normal GNNs (\emph{e.g.}, layer sampling \citep{vrgcn} and graph sampling \citep{graphsaint}), such algorithms either result in accuracy degradation or limited scalability. Therefore, most of the OGB leaderboard methods are tuned with \emph{full-batch} training.  

Comparing with the simplified GNN models (\emph{e.g.}, SIGN \citep{sign} and SGC \citep{sgc}), {\shadow} requires much less RAM size. The reason is that both SIGN and SGC rely on preprocessing of node features over the full graph. Specifically, for SIGN, its preprocessing step concatenates the original node features with the smoothened values. For the $(p,s,t)$ architecture of SIGN (see the SIGN paper for definition of $p$, $s$ and $t$), the memory required to store the preprocessed features equals:

\begin{equation}
M = N \cdot f \cdot \paren{\paren{p+1} + \paren{s+1} + \paren{t+1}}
\end{equation}

where $N$ is the number of nodes in the full graph and $f$ is the original node feature size. For the $(3,3,3)$ SIGN / SIGN-XL architecture on the {\papers} leaderboard (see Table \ref{tab: exp-papers}), the required RAM size equals $M = 682$GB. This number does not even consider the RAM consumptions due to temporary variables or other buffers for the trainig operations. 

For SGC, even though it does not perform concatenation of the smoothened features, it still requires double the original feature size to store the temporary values of $\mA^K\cdot \bm{X}$. The original features of {\papers} takes around 56GB, and the full graph adjacency matrix consumes around 25GB. In sum, SGC requires at least $2\cdot 56 + 25 = 137$GB of RAM. 

Table \ref{tab: ogb measured mem} summarizes the RAM / GPU memory consumption for the leaderboard methods listed in Table \ref{tab: exp-papers}. 
Note that our machines do not support the training of SIGN (due to the RAM size constraint), and thus we only show the lower bound of SIGN's RAM consumption in Table \ref{tab: ogb measured mem}. 

\begin{table}[!ht]
    \caption{Memory consumption of the {\papers} leaderboard methods}
        \centering
        \begin{tabular}{lcc}
            \toprule
            Method & CPU RAM & GPU memory\\
            \midrule\midrule
            GraphSAGE+incep & 80GB & 16GB\\
            SIGN-XL & $>$682GB & 4GB \\
            SGC & $>$137GB & 4GB\\
            {\shadowgat} & \textbf{80GB} & \textbf{4GB}\\
            \bottomrule
        \end{tabular}
    \label{tab: ogb measured mem}
\end{table}

On the other hand, {\shadow} can flexibly adjust its batch size based on the available memory. Even for {\papers}, a typical low-end server with 4GB of GPU memory and 128GB of RAM is sufficient to train the 5-layer {\shadowgat}. Increasing the batch size of {\shadow} may further lead to higher GPU utilization for more powerful machines. 
Figure \ref{fig: exp mem time} shows the computation time speedup (compared with batch size 32) and GPU memory consumption for {\shadowgat} under batch size of 32, 64 and 128. 
A 5-layer {\shadowgat} only requires around 5GB of GPU memory to saturate the computation resources of the powerful GPU cards such as NVIDIA RTX 3090.

\begin{figure}
    \begin{center}
    \definecolor{colA}{RGB}{255,23,68}  
\definecolor{colB}{HTML}{B6A807}    
\definecolor{colC}{HTML}{4AAC2B}    
\definecolor{colD}{HTML}{0F6FCA}    
\definecolor{skyblue}{RGB}{213,0,0}
\definecolor{yellowgreen}{RGB}{255,23,68}
\definecolor{red1}{HTML}{FB7E78}
\definecolor{c3}{HTML}{0F6FCA}

\def\plotwidth{0.6\linewidth}
\def\plotheight{0.4\linewidth}
\def\maxmem{12}
\def\minspeed{0.}
\def\maxspeed{2.5}
\def\plotW{0.59}

\begin{tikzpicture}
\begin{axis}[
    height=\plotheight,
    width=\plotwidth,
    grid,
    label style = {font = {\fontsize{9.5 pt}{12 pt}\selectfont}},
    tick label style = {font = {\fontsize{8.5 pt}{12 pt}\selectfont}},
    ymin=\minspeed, ymax=\maxspeed,
    xmin=0, xmax=\maxmem,
    ylabel=Execution speedup ($\times$),
    xlabel=GPU (RTX 3090) memory consumption (GB),
    legend cell align=left,
    legend columns=2,
    legend style={
        draw=none,
        fill=none,
        nodes={scale=0.75,},
        at={(1.05, 1.2)}},
]
\addplot[colA, mark=*] coordinates{
    (2.3, 1)
    (4.2, 1.257)
    (9.5, 1.1364)
};
\addplot[colB, mark=*] coordinates{
    (2.35, 1)
    (4.6, 1.71)
    (10, 1.4)
};

\legend{{\products}, {\papers}}
\end{axis}
\end{tikzpicture}
    \caption{Controlling the GPU memory-speed tradeoff by {\shadowgat} batch size (32, 64, 128)}
    \label{fig: exp mem time}
    \end{center}
\end{figure}


\subsection{{\shadow} on Other Architectures}
\label{appendix: other arch}
In addition to the GCN, GraphSAGE and GAT models in Table \ref{tab: exp-sota}, 
we further compare JK-Net and GIN with their \textsc{shaDow} versions in Table \ref{tab: shadow other}. For all the results in Tabel \ref{tab: exp-sota},  we do not apply pooling or ensemble on {\shadow}. 
Similar to DropEdge, the skip connection (or ``jumping knowledge'') of JK-Net helps accuracy improvement on deeper models. 
Compared with the normal JK-Net, increasing the depth benefits \textsc{shaDow-JK} more.  
The GIN architecture theoretically does not oversmooth. However, we observe that the GIN training is very sensitive to hyperparameter settings. 
We hypothesize that such a challenge is due to the sensitivity of the sum aggregator on noisy neighbors (\textit{e.g.}, for GraphSAGE, a single noisy node can hardly cause a significant perturbation on the aggregation, due to the averaging over the entire neighborhood). 
The accuracy improvement of {\shadowgin} compared with the normal GIN may thus be due to noise filtering by the {\sample}. 
The impact of noises / irrelevant neighbors can  be critical, as reflected by the 5-layer GIN accuracy on {\reddit}. 

\begin{table}[!ht]
    \caption{Test accuracy on other architectures (PPR {\sample})}
        \centering
        \resizebox{\linewidth}{!}{
        \begin{tabular}{lcccccc}
    \toprule
    & \multicolumn{2}{c}{\flickr} & \multicolumn{2}{c}{\reddit} & \multicolumn{2}{c}{\arxiv} \\
    \cmidrule(lr){2-3}\cmidrule(lr){4-5}\cmidrule(lr){6-7}& Normal & {\textsc{shaDow}} & Normal & {\textsc{shaDow}} & Normal & {\textsc{shaDow}}\\
    \midrule
    \midrule
    JK (3) & 0.4945\std{0.0070} & \textbf{0.5317}\std{0.0027} & 0.9649\std{0.0010} & \textbf{0.9682}\std{0.0003} & 0.7130\std{0.0026} & \textbf{0.7201}\std{0.0017}\\
    JK (5) & 0.4940\std{0.0083} & \textbf{0.5328}\std{0.0026} & 0.9640\std{0.0013} & \textbf{0.9685}\std{0.0006} & 0.7166\std{0.0053} & \textbf{0.7226}\std{0.0024}\\
    \cmidrule(lr){2-7}
    GIN (3) & 0.5132\std{0.0031} & \textbf{0.5228}\std{0.0028} & 0.9345\std{0.0034} & \textbf{0.9578}\std{0.0006} & 0.7087\std{0.0016} & \textbf{0.7173}\std{0.0029}\\
    GIN (5) & 0.5004\std{0.0067} & \textbf{0.5255}\std{0.0023} & 0.7550\std{0.0039} & \textbf{0.9552}\std{0.0007} & 0.6937\std{0.0062} & \textbf{0.7140}\std{0.0027}\\
    \bottomrule
\end{tabular}

        }
    \label{tab: shadow other}
\end{table}

\subsection{Benefits of Pooling}
\label{appendix: exp pool}

In Table \ref{tab: exp pool}, we summarize the average accuracy of 5-layer {\shadowsage} obtained from different pooling functions (see Table \ref{tab: pool ops} for the equations for pooling). 
For both graphs, adding a pooling layer helps improve accuracy. On the other hand, the best pooling function may depend on the graph characteristics. We leave the in-depth analysis on the effect of pooling as future work. 

\begin{table}[!ht]
    \caption{Effect of subgraph pooling on 5-layer {\shadowsage}}
        \centering
        \begin{tabular}{lcccc}
            \toprule
            Dataset & None & Mean & Max & Sort\\
            \midrule\midrule
            {\flickr} & 0.5351\std{0.0026} & 0.5361\std{0.0015} & 0.5354\std{0.0021} & \textbf{0.5367}\std{0.0026}\\
            {\arxiv} & 0.7302\std{0.0014} & 0.7304\std{0.0015}& \textbf{0.7342}\std{0.0017} & 0.7295\std{0.0018}\\
            \bottomrule
        \end{tabular}
    \label{tab: exp pool}
\end{table}

\subsection{Cost of Subgrpah Extraction} 


\begin{figure}
    \begin{center}
    \definecolor{colA}{RGB}{255,23,68}  
\definecolor{colB}{HTML}{B6A807}    
\definecolor{colC}{HTML}{4AAC2B}    
\definecolor{colD}{HTML}{0F6FCA}    
\definecolor{skyblue}{RGB}{213,0,0}
\definecolor{yellowgreen}{RGB}{255,23,68}
\definecolor{red1}{HTML}{FB7E78}
\definecolor{c3}{HTML}{0F6FCA}

\def\plotwidth{0.6\textwidth}
\def\plotheight{0.4\textwidth}

\begin{tikzpicture}
\begin{axis}[
    height=\plotheight,
    width=\plotwidth,
    tick label style={font=\footnotesize},
    title style={font=\small,at={(axis description cs:0.5, 0.95)}},
    grid,
    label style = {font = {\fontsize{9.5 pt}{12 pt}\selectfont}},
    tick label style = {font = {\fontsize{8.5 pt}{12 pt}\selectfont}},
    enlarge x limits=0.2,ymin=0,ymax=3,
    ybar=\pgflinewidth,
    tick align=inside,
    xtick=data,
    x tick label style={rotate=45},
    bar width=4pt,
    ymajorgrids = true,
    tick label style={font=\small},
    symbolic x coords={{\flickr}, {\reddit}, {\yelp}, {\arxivshort}, {\productsshort}, {\papersshort}},
    ylabel=Inference time per node (ms),
    width=\plotwidth,
    height=\plotheight,
    y label style={at={(axis description cs:-0.1,.37)},anchor=south},
    yticklabel style={/pgf/number format/precision=2,/pgf/number format/fixed},
    legend style={font=\small},
    legend cell align=left,
    legend columns=3,
    legend style={at={(0.52,1.27)},anchor=north,/tikz/column 5/.style={column sep=5pt},draw=none,/tikz/every even column/.append style={column sep=0.1cm}},
    label style = {font = {\fontsize{9.5 pt}{12 pt}\selectfont}},
]
\addplot[style={c3,fill=c3,mark=none}]
            coordinates {
            ({\flickr},0.9) 
            ({\reddit},0.33) 
            ({\yelp},0.2) 
            ({\arxivshort},0.82) 
            ({\productsshort}, 0.52)
            ({\papersshort}, 0.63)};
\addplot[style={red1,fill=red1,mark=none}]
            coordinates {
            ({\flickr}, 0.66)
            ({\reddit}, 1.09) 
            ({\yelp},.35) 
            ({\arxivshort}, 0.68)
            ({\productsshort},0.9)
            ({\papersshort}, 1.32)};  
\addplot[style={yellowgreen,fill=yellowgreen,mark=none}]
            coordinates {
            ({\flickr}, 1.98) 
            ({\reddit},1.73) 
            ({\yelp},1) 
            ({\arxivshort},1.18)
            ({\productsshort},2.5)
            ({\papersshort}, 2.23)};

\legend{PPR,{\shadowsage},{\shadowgat}}

\end{axis}
\end{tikzpicture}
    \caption{Measured execution time for PPR {\sample} and the GNN model computation}
    \label{fig: ppr time}
    \end{center}
\end{figure}
We evaluate the PPR {\sample} in terms of its execution time overhead and accuracy-time tradeoff. 
In Figure \ref{fig: ppr time}, we parallelize {\sample} using half of the available CPU cores of \textsc{Machine 3} (\emph{i.e.}, 32 cores) and execute the GNN computation on the RTX 3090 GPU. 
Clearly, the PPR {\sample} is lightweight: the sampling time is lower than the GNN computation time in most cases. In addition, the sampling time per node does not grow with the full graph size. This shows that {\shadow} is scalable to massive scale graphs. 
By the discussion in Appendix \ref{appendix: sampler detail}, the approximate PPR computation achieves efficiency and scalability by only traversing a local region around each target node. 

\subsection{Tradeoff Between Inference Time and Accuracy}
\label{appendix: acc-time tradeoff}

\begin{figure}
    \centering
    \tikzset{mark size=3}
\definecolor{red1}{HTML}{FB7E78}
\definecolor{red2}{RGB}{255,23,68}
\definecolor{blue1}{HTML}{78C5FB}
\definecolor{blue2}{HTML}{0F6FCA}

\begin{tikzpicture}
\def \shadowSAGE{red1}
\def \shadowGAT{red2}
\def \SAGE{blue1}
\def \GAT{blue2}

\def \dataF{+}
\def \dataR{x}
\def \dataY{star}
\def \dataA{Mercedes star}
\def \dataP{asterisk}

\def \op{0.2}
\begin{axis}[
    xmode=log,
    ymin=0.4,ymax=1,xmin=0.1,xmax=1000,
    xmajorgrids=true,
    ymajorgrids=true,
    width=0.6\linewidth,
    height=0.4\linewidth,
    ylabel=Test accuracy,
    xlabel=Inference time per node (ms),
    title=Tradeoff analysis (5-layer),
    title style={at={(0.47,0.93)}},
    scatter/classes={%
    shadowsage={mark=square*,\shadowSAGE,mark size=4pt},
        shadowgat={mark=square*,\shadowGAT,mark size=4pt},
        sage={mark=square*,\SAGE,mark size=4pt},
        gat={mark=square*,\GAT,mark size=4pt},
        flickr={mark=\dataF,black!60,thick},
        reddit={mark=\dataR,black!60,thick},
        arxiv={mark=\dataA,black!60,thick},
        products={mark=\dataP,black!60,thick},
        shadowsageflickr={mark=\dataF,\shadowSAGE},
        sageflickr={mark=\dataF,\SAGE},
        shadowgatflickr={mark=\dataF,\shadowGAT},
        gatflickr={mark=\dataF,\GAT},
        shadowsagereddit={mark=\dataR,\shadowSAGE},
        sagereddit={mark=\dataR,\SAGE},
        shadowgatreddit={mark=\dataR,\shadowGAT},
        gatreddit={mark=\dataR,\GAT},
        shadowsagearxiv={mark=\dataA,\shadowSAGE},
        sagearxiv={mark=\dataA,\SAGE},
        shadowgatarxiv={mark=\dataA,\shadowGAT},
        gatarxiv={mark=\dataA,\GAT},
        shadowsageproducts={mark=\dataP,\shadowSAGE},
        sageproducts={mark=\dataP,\SAGE},
        shadowgatproducts={mark=\dataP,\shadowGAT},
        gatproducts={mark=\dataP,\GAT}
    },
    legend style={nodes={scale=0.8, transform shape},cells={align=left}},
    legend cell align=left,
    legend columns=1,
    legend style={at={(1.3,1.)},anchor=north,/tikz/column 4/.style={column sep=5pt},draw=none,/tikz/every even column/.append style={column sep=0.07cm}},
    y label style={at={(axis description cs:-0.15,.45)},anchor=south},
    label style = {font = {\fontsize{9.5 pt}{12 pt}\selectfont}},
    tick label style = {font = {\fontsize{8.5 pt}{12 pt}\selectfont}},
]
\addplot[scatter,thick,only marks,%
    scatter src=explicit symbolic] 
    table[meta=label] {
    x  y      label
    1.194	0.4875	shadowgatflickr
    1.451	0.5132	shadowgatflickr
    1.719	0.5281	shadowgatflickr
    1.983	0.538	shadowgatflickr
    1.111	0.4949	shadowsageflickr
    1.307	0.5247	shadowsageflickr
    1.512	0.5341	shadowsageflickr
    1.704	0.5407	shadowsageflickr
};
    
\addplot[scatter,thick,only marks,%
    scatter src=explicit symbolic] 
    table[meta=label] {
    x  y      label
    462  0.480  sageflickr
    500  0.522  gatflickr
    80   0.719  sagearxiv
    175  0.720  gatarxiv
};

\addplot[scatter,thick,only marks,%
    scatter src=explicit symbolic] 
    table[meta=label] {
    x  y      label
    0.808	0.9611	shadowgatreddit
    1.436	0.9693	shadowgatreddit
    2.134	0.9723	shadowgatreddit
    2.801	0.9727	shadowgatreddit
    0.61	0.9572	shadowsagereddit
    0.973	0.9661	shadowsagereddit
    1.458	0.9697	shadowsagereddit
    1.985   0.9695	shadowsagereddit
};

\addplot[scatter,thick,only marks,%
    scatter src=explicit symbolic] 
    table[meta=label] {
    x  y      label
    1.135	0.7227	shadowgatarxiv
    1.516	0.73	shadowgatarxiv
    1.894	0.7318	shadowgatarxiv
    2.28	0.7321	shadowgatarxiv
    1.117	0.7206	shadowsagearxiv
    1.433	0.7248	shadowsagearxiv
    1.707	0.7256	shadowsagearxiv
    1.943	0.7263	shadowsagearxiv
};

\addplot[scatter,thick,only marks,%
    scatter src=explicit symbolic] 
    table[meta=label] {
    x  y      label
    0.97	0.7932	shadowgatproducts
    1.64	0.8028	shadowgatproducts
    2.20	0.806	shadowgatproducts
    2.61	0.8074	shadowgatproducts
    0.88	0.7874	shadowsageproducts
    1.378	0.797	shadowsageproducts
    1.809	0.8001	shadowsageproducts
    2.08	0.8015	shadowsageproducts
};

\legend{{\textsc{shaDow}\\SAGE},{\textsc{shaDow}\\GAT},{Normal\\SAGE},{Normal\\GAT},{\flickr},{\reddit},{\arxivshort},{\productsshort}}
\end{axis}
\end{tikzpicture}
    \captionof{figure}{Inference performance tradeoff. We test pre-trained models by subgraphs of various sizes. }
    \label{fig: acc-time tradeoff}
\end{figure}

To evaluate the accuracy-time tradeoff, we take the $5$-layer models of Table \ref{tab: exp-sota} as the pretrained models. Then for {\shadow}, we vary the PPR budget $p$ from 50 to 200 with stride 50. 
In Figure \ref{fig: acc-time tradeoff}, the inference time of {\shadow} has already included the PPR sampling time. 
Firstly, consistent with Table \ref{tab: exp-sota}, inference of {\shadow} achieves higher accuracy than the normal GNNs, with orders of magnitude speedup as well. 
In addition, based on the application requirements (\textit{e.g.}, latency constraint), {\shadow}s have the flexibility of adjusting the sampling size without the need of retraining. 
For example, on {\reddit} and {\arxiv}, directly reducing the subgraph size from 200 to 50 reduces the inference latency by $2\times$ to $4\times$ at the cost of less than $1\%$ accuracy drop.

\begin{table}[!ht]
\caption{{\shadowgcn} test accuracy}
\centering
\begin{tabular}{clcc}
    \toprule
    $L'$ & \sample & {\flickr}  & {\products} \\
    \midrule
    \midrule
     3 & PPR & 0.5257\std{0.0021} &  0.7773\std{0.0032}\\
    \cmidrule(lr){2-4}
    \multirow{3}{*}{5} &$2$-hop & 0.5210\std{0.0023} & 0.7794\std{0.0039}\\
    & PPR & 0.5273{\std{0.0020}}  & 0.7836\std{0.0034} \\
    & Ensemble & \textbf{0.5304}\std{0.0017} & \textbf{0.7858}\std{0.0021} \\
    \cmidrule(lr){2-4}
    7 & PPR & 0.5225\std{0.0023} & \textbf{0.7850}\std{0.0044}\\
    \bottomrule
\end{tabular}

        \label{tab: exp shadow ens deep}
\end{table}

\subsection{Ensemble and Deeper Models}
\label{appendix: exp ens depth}

Table \ref{tab: exp shadow ens deep} shows additional results on subgraph ensemble and deeper {\shadowgcn} models. 
The PPR and $2$-hop {\sample} follow the same configuration as Table \ref{tab: config} and \ref{tab: config 2}. When varying the model depth, we keep all the other hyperparameters unchanged. 
From both the {\flickr} and {\products} results, we observe that ensemble of the PPR {\sample} and the $2$-hop {\sample} helps improve the {\shadowgcn} accuracy. 
From the {\products} results, we additionally observe that increasing {\shadowgcn} to deeper than 5 layers may still be beneficial. 
As analyzed by Figure \ref{fig: hop count}, the model depth of 5 is already much larger than the hops of the subgraph. The 7-layer results in Table \ref{tab: exp shadow ens deep} indicate future research opportunities to improve the {\shadow} accuracy by going even deeper.

\end{document}